\documentclass{article}

\usepackage{iclr2027_conference,times}
\usepackage{microtype}
\usepackage{graphicx}
\usepackage{subcaption}
\usepackage{booktabs}
\usepackage{float}
\usepackage{hyperref}
\usepackage{url}
\usepackage{amsmath}
\usepackage{amssymb}
\usepackage{mathtools}
\usepackage{amsthm}
\usepackage[capitalize,noabbrev]{cleveref}
\theoremstyle{plain}
\newtheorem{theorem}{Theorem}[section]

\newtheorem{corollary}[theorem]{Corollary}
\theoremstyle{definition}

\theoremstyle{remark}

\newcommand{\lstate}{\textsc{L-State}}
\newcommand{\caponly}{\textsc{Capability}}
\newcommand{\ophead}{\textsc{Operator}}

\newcommand{\sourcegate}{\textsc{Action-Wise Pulse Selector}}
\newcommand{\R}{\mathbb{R}}

\title{\mbox{Target-Independent Micro-Interventions}\\
\mbox{for Predicting Training Response Across}\\
\mbox{Language-Model Families}}
\author{Zhongxuan Liu \qquad Sicheng Zhou \qquad Hongzhi Wang\thanks{Corresponding author.}\\
Faculty of Computing, Harbin Institute of Technology\\
\texttt{lazrix@163.com} \quad \texttt{ylnfq\_2021@qq.com}\\
\texttt{wangzh@hit.edu.cn}}
\iclrfinalcopy

\begin{document}

\maketitle
\lhead{Preprint}

\begin{abstract}
Benchmark scores describe what a checkpoint can do now, but they do not
determine how it will respond to the next training episode. We measure this
missing state by branching four short, standardized, target-independent
micro-interventions from the same checkpoint and recording their effects in a
common capability space. Together with current capability, these responses
form \lstate{}; its pulse block supports a flexible direct readout and a
structure-preserving operator readout. Under smooth local dynamics, the
operator construction admits an end-to-end cross-family bound with explicit
source- and target-family coordinate heterogeneity. In three-family
leave-one-family-out development, both pulse readouts reduce source-standardized
MSE by 39.4\% relative to capability alone, while separating the best response
and direction estimates. On sealed GLM-4-9B, the direct and operator readouts
reduce MSE by 71.8\% and 78.3\%, respectively, and the operator readout raises
sign balanced accuracy from 0.366 to 0.754. On sealed Granite-3.1-8B, the direct
readout reaches RMSE 0.544 and a development-fitted action-wise selector reaches
0.554, compared with 1.172 for capability alone. A five-family audit finds that
the operator coordinate varies by action and family, and that modeling these
deviations improves retrospective held-trajectory prediction. Target-independent
interventions therefore expose training-response information that current
capability misses, with direct and structured readouts covering complementary
transfer regimes.
\end{abstract}

\section{Introduction}

Suppose two language-model checkpoints obtain the same scores on mathematics,
code, science, and reading. Are they equally good starting points for another
round of training? Identical current scores leave the answer unresolved. One
checkpoint may learn a new skill quickly, another may forget neighboring
skills, and a third may remain nearly unchanged. Present-day scores record
what a model currently does; controlled micro-training records how that
checkpoint changes under a fixed training contract.

We study whether the missing information can be measured by intervention. From
the same checkpoint, we independently branch four short, standardized, and
target-independent training episodes. We evaluate the capability change caused
by each episode and concatenate those responses with the checkpoint's current
capabilities. This 25-dimensional object is an \lstate{}: five current
capabilities plus four five-dimensional response vectors (Figure
\ref{fig:pipeline}). Figure~\ref{fig:overview} connects this construction to the
complete theory-to-evidence program. Its 20-dimensional pulse block supports
two complementary cross-family readouts. The direct readout learns a
multi-output map from the measured pulse responses to a target-action response.
The operator readout organizes those same measurements as a local response
operator and applies a source-estimated action coordinate. Both readouts live
in a common evaluation space, so families with different parameters, gradients,
and hidden representations can be compared through the same measured responses.

\begin{figure*}[t]
  \centering
  \includegraphics[width=0.98\textwidth]{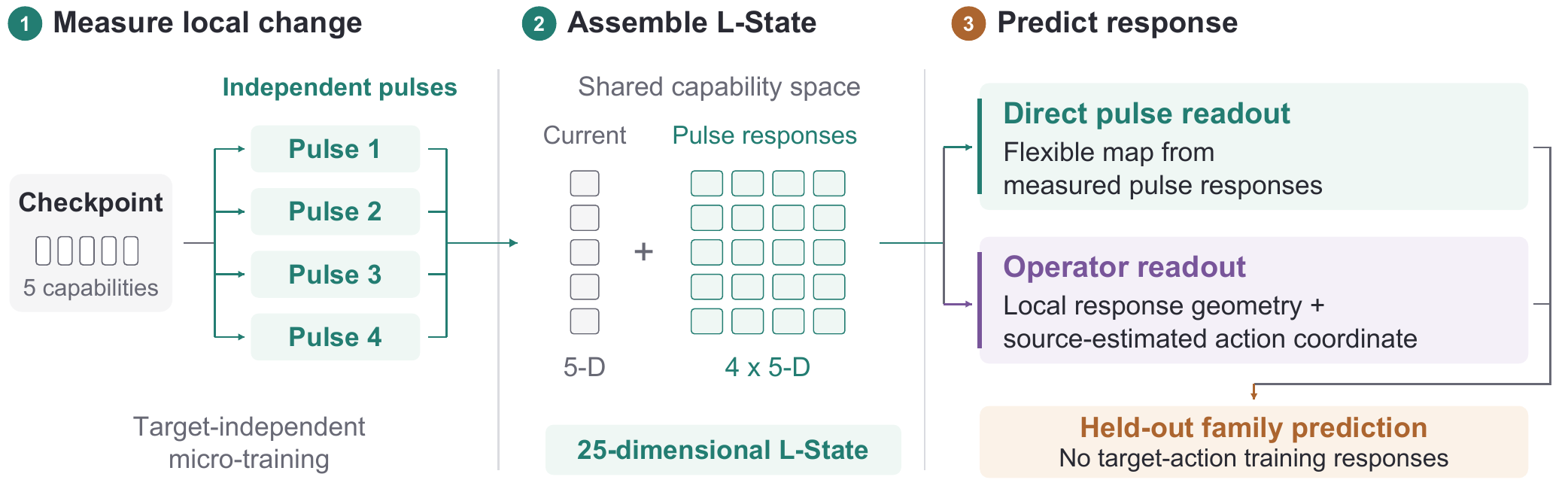}
  \caption{\textbf{Why target-independent interventions reveal training
  response.} Current capability describes a checkpoint before the next update.
  Four standardized pulses branch from that checkpoint and measure local
  changes in a shared capability space. The resulting L-State supports direct
  and operator-factorized prediction for a held-out family without using its
  target-action labels.}
  \label{fig:pipeline}
\end{figure*}

\begin{figure*}[t]
  \centering
  \includegraphics[width=0.98\textwidth]{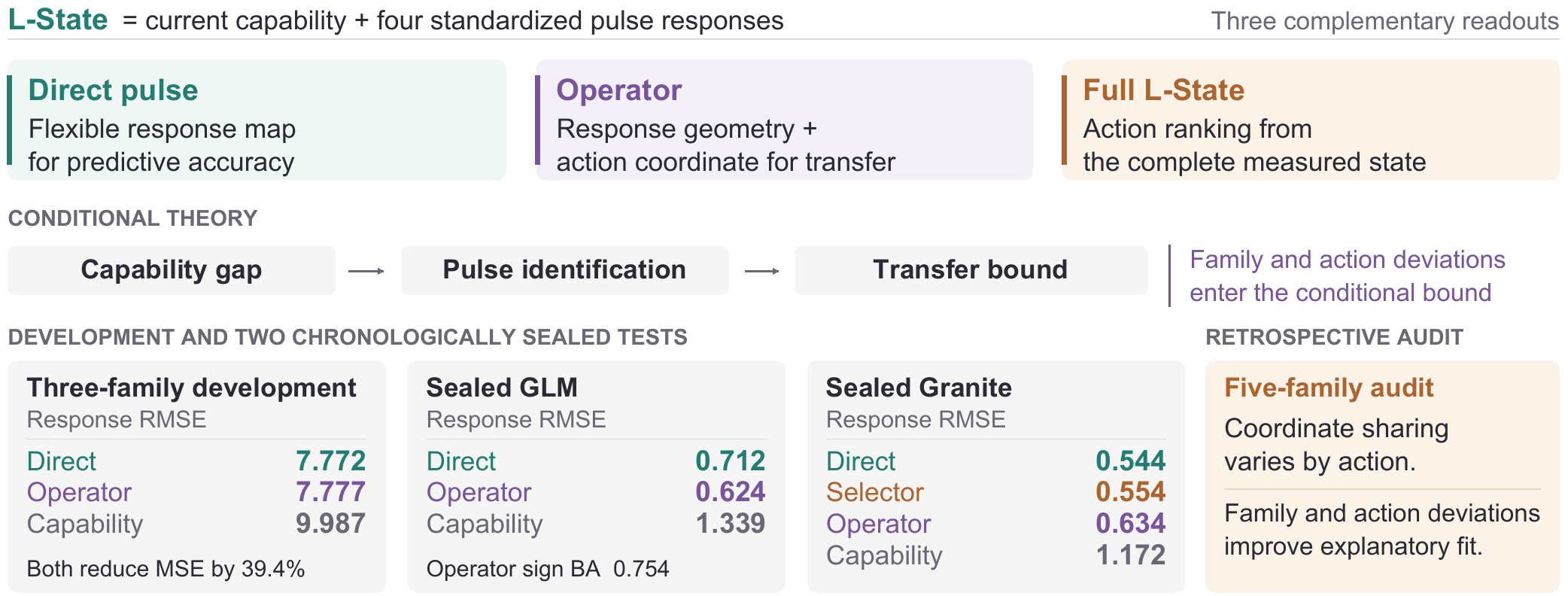}
  \caption{\textbf{Theory and evidence for cross-family training-response
  prediction.} L-State supports direct, operator-factorized, and full-state
  readouts. Conditional theory motivates the operator construction, two sealed
  tests evaluate transfer, and a five-family audit measures action- and
  family-specific coordinate deviations.}
  \label{fig:overview}
\end{figure*}

The resulting prediction problem is deliberately strict. To predict the
response of Qwen, for example, a readout may use target-action responses from
Mistral and OLMo and may measure Qwen's current capability and standard-pulse
responses. Qwen target-action responses remain sealed until every prediction
byte is frozen. This leave-one-family-out (LOFO) design measures transfer of a
behavioral response state across distinct model families.

Our core insight is that target-independent pulse responses form a transferable
training state with two complementary readouts. The direct readout learns a
flexible map from the pulse block; the operator readout factors prediction into
local response geometry and a source-estimated action coordinate. Under the
conditions in Section~3, Theorem \ref{thm:end-to-end} supports the operator and
Eq. \eqref{eq:e2e} separates target identification, source calibration, and
cross-family coordinate bias. Development data determine which structure best
serves each endpoint.

In three-family LOFO development, both pulse readouts reduce continuous-response
MSE by 39.4\% relative to capability alone: the direct readout has the lowest
response RMSE and the operator the highest sign balanced accuracy. We freeze
the operator for response and direction and the full-state readout for action
scoring before opening GLM-4-9B labels. On GLM, direct and operator MSE reductions
reach 71.8\% and 78.3\%, while the operator raises sign balanced accuracy by
38.8 points over capability.

GLM also exposes an operator inversion on science: all 27 states and four
additional realizations preserve the reversed sign, while the source action
coordinate is shrunken and unstable. We therefore freeze an action-wise selector
from held-out development-family wins before opening Granite-3.1-8B. On Granite,
the direct readout reaches RMSE 0.544, the selector 0.554, the operator retains
0.657 coordinate-macro sign balanced accuracy, and the full \lstate{} readout
reduces action regret by 25.5\%.

Our contributions are:
\begin{itemize}
  \item We introduce an intervention-derived learning state with direct and
  structure-preserving operator readouts.
  \item We derive conditional response, sign, regret, and exact-action guarantees
  with explicit source- and target-family coordinate-bias terms.
  \item Across development and two sealed families, pulse readouts improve
  response prediction over capability; a development-fitted selector reaches
  RMSE 0.554 on Granite versus 1.172 for capability.
  \item We test pooled action coordinates across five families, quantify family
  and family--action deviations, and link heterogeneity to readout selection.
\end{itemize}

\section{Related Work}

\paragraph{Predicting adaptation.}
Static diagnostic probes can predict fine-tuning performance in studied NLP
settings \citep{zhu2022probing}; analyses of intermediate checkpoints reveal
training regularities shared across scales \citep{xia2023trajectories}; and
observational scaling laws compress benchmark measurements into a low-dimensional
capability space \citep{ruan2024observational}. Transferability scores similarly
ask which pretrained representation or checkpoint will adapt well
\citep{nguyen2020leep,munn2025bayesian}. Resource-constrained model selection
has also been formulated as extrapolating full fine-tuning performance from
smaller-data runs through a rectified scaling law \citep{lin2024rectified}.
Model-level representation work constructs compact embeddings for correctness,
routing, and benchmark-performance prediction \citep{zhuang2025embedllm}, and
training-free functional fingerprints compare heterogeneous models
\citep{wu2026llmdna}. These methods characterize current functional behavior;
\lstate{} derives its representation from controlled training interventions and
predicts candidate-action response vectors.
Closest in experimental goal, \textsc{TuneAhead} combines dataset descriptors
with a short standardized probe to forecast scalar final scores for runs based
on Qwen2.5-7B-Instruct \citep{luo2026tuneahead}. A concurrent preprint is closer
in mechanism: it models learning through a parameter--optimizer receiving state
and target-specific update geometry in nanoGPT, ResNet, and diffusion
experiments \citep{wang2026unified}. \lstate{} differs in representation and
validation: current capabilities and target-independent pulse responses live in
a common evaluation space, predict a signed response vector over several
candidate actions, and are evaluated by family-held-out development followed by
chronologically sealed tests on GLM-4-9B and Granite-3.1-8B with downstream
action regret.

\paragraph{From training data to behavior.}
Datamodels learn how training-set composition changes predictions
\citep{ilyas2022datamodels}; influence functions and optimizer-aware gradient
methods trace or rank training examples by their target effect
\citep{koh2017influence,xia2024less}. Task arithmetic represents completed
fine-tuning runs as directions in a shared parameter space
\citep{ilharco2023task}. Recent work transports completed task vectors across
heterogeneous-width models by aligning observed internal activations and their
functional effect \citep{rinaldi2026theseus}. \lstate{} takes a complementary
route: it treats the checkpoint as the object being identified, measures
target-independent pulse responses in evaluation space, and predicts future
candidate-action responses. Our question is also conditional in the sense of
\citet{hewitt2021conditional}: do pulse responses provide usable information
beyond current capability?

\paragraph{Intervention and domain transfer.}
Informative controlled inputs are central to active system identification
\citep{wagenmaker2020active}. This analogy motivates standardized pulses, but
our local response model states the training-domain assumptions explicitly.
Following domain-generalization evaluation lessons
\citep{gulrajani2021domainbed}, family identity defines the outer split and all
selection occurs on source families. The prospective stage extends this design
with chronological seals: endpoint rules, response predictions, and
utility-conditioned action choices are serialized before held-out-family
target-action labels enter scoring.

\section{L-State and Its Pulse Readouts}

\subsection{Why Static Capability Is Insufficient}

Each model family $m$ has a state space $\mathcal X_m$. A shared evaluation
contract maps $x\in\mathcal X_m$ to capabilities $c_m(x)\in\R^d$, with larger
coordinates better. For a micro-training action $u$, strength $h$, and training
randomness $\xi$, define the population response
\begin{equation}
 r_m^h(x,u)=\frac{\mathbb E_{\xi}[c_m(U^h_{m,u}(x;\xi))]-c_m(x)}{h}.
 \label{eq:response}
\end{equation}

\begin{theorem}[Capability sufficiency and non-identifiability]
\label{thm:capability}
If $c_m(x)=c_{m'}(x')=c$, write
$e=\|f(c,u)-r_m^h(x,u)\|_2$ and
$e'=\|f(c,u)-r_{m'}^h(x',u)\|_2$. Every capability-only predictor satisfies
\begin{equation}
 \max\{e,e'\}
 \geq \tfrac12\|r_m^h(x,u)-r_{m'}^h(x',u)\|_2.
 \label{eq:cap-lower}
\end{equation}
Moreover, an exact capability-only response map exists on a domain if and only
if, for every $u$, the response is constant on every capability level set.
\end{theorem}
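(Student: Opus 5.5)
The lower bound \eqref{eq:cap-lower} is a triangle-inequality argument. A capability-only predictor $f$ sees only $(c,u)$, so it issues the same vector $f(c,u)$ for both checkpoints $x\in\mathcal X_m$ and $x'\in\mathcal X_{m'}$ whenever $c_m(x)=c_{m'}(x')=c$. Writing $r=r_m^h(x,u)$ and $r'=r_{m'}^h(x',u)$, I will use
\[
\|r-r'\|_2\le \|r-f(c,u)\|_2+\|f(c,u)-r'\|_2 = e+e'\le 2\max\{e,e'\},
\]
and then divide by two. This step needs no assumption on $f$ beyond measurability of its dependence on $(c,u)$. It also applies whether $m=m'$ (two checkpoints in one family) or $m\neq m'$ (across families).

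For the characterization, I will formalize a \emph{domain} as a set $D$ of pairs $(m,x)$, possibly spanning several families, together with an action set $\mathcal U$. The level set of $c$ is $L_c=\{(m,x)\in D: c_m(x)=c\}$. An exact capability-only map is a function $f:\R^d\times\mathcal U\to\R^d$ with $f(c_m(x),u)=r_m^h(x,u)$ for all $(m,x)\in D$ and all $u\in\mathcal U$.

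The ``only if'' direction follows from the first part. If $f$ is exact, then $e=e'=0$ for any two points of a common level set, so \eqref{eq:cap-lower} forces $r_m^h(x,u)=r_{m'}^h(x',u)$. Hence the response is constant on each $L_c$ for every $u$.

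For ``if'', I will define $f(c,u)$ as the common value of $r^h$ on $L_c$ when $L_c\neq\emptyset$, and arbitrarily (say $0$) otherwise. This is well defined precisely because of the constancy hypothesis, and it is exact by construction. In other words, the response factors through the capability map.

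I do not expect a real obstacle. The only care needed is definitional. First, the predictor must be a function of $(c,u)$ alone, with no family index, so that it truly cannot distinguish $x$ from $x'$. Second, ``domain'' must be stated so the level sets are taken over the union of families rather than within a single family. Third, the response $r_m^h$ must be well defined, meaning the expectation over $\xi$ in \eqref{eq:response} exists. I will state these conventions explicitly before the two arguments.
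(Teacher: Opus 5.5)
Your proposal is correct and matches the paper's proof: the triangle inequality through the common prediction $f(c,u)$ gives the half-distance bound, an exact map forces equal responses on each level set, and conversely $f(c,u)$ is defined as the common response on each level set, which constancy makes well defined. The differences are cosmetic: you derive the ``only if'' direction from the lower bound with $e=e'=0$ rather than directly, and your measurability caveat can be dropped because the inequality holds pointwise.
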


Thus the missing state variable is precisely the within-level-set variation of
training response. Under the geometry developed next, controlled interventions
make that variation observable.

\subsection{Local Intervention Geometry}

The transfer chain uses five explicit protocol conditions: a named pulse gauge,
smooth local dynamics, spanning pulse excitation, positive source coverage, and
bounded measurement and misspecification errors. Whether target actions have
the same coordinate across families is left as a hypothesis and enters the
bounds quantitatively.

In the common named-pulse gauge, fix a reference family set and positive
normalized weights $\{\pi_m\}_m$ and $\{\omega_u\}_u$. Write
\begin{equation}
 a_{m,u}=a_0+\alpha_u+\beta_m+\gamma_{m,u}
 =\bar a_u+\delta_{m,u},
 \label{eq:coordinate-decomposition}
\end{equation}
where $\bar a_u=a_0+\alpha_u$ is pooled, $\beta_m$ is the family deviation, and
$\gamma_{m,u}$ is the family--action interaction. Identifiability uses
\begin{equation}
 \sum_u\omega_u\alpha_u=0,\quad
 \sum_m\pi_m\beta_m=0,\quad
 \sum_m\pi_m\gamma_{m,u}=0\ \forall u,\quad
 \sum_u\omega_u\gamma_{m,u}=0\ \forall m.
 \label{eq:coordinate-centering}
\end{equation}
Thus $\bar a_u=\sum_m\pi_m a_{m,u}$ and
$\delta_{m,u}=a_{m,u}-\bar a_u$. Exact sharing is the nested hypothesis
\begin{equation}
 H_{\rm share}:\quad \beta_m=0\ \text{and}\ \gamma_{m,u}=0
 \quad\text{for all }m,u.
 \label{eq:sharedness-hypothesis}
\end{equation}

Work in a local Euclidean chart around $x$ and write the random training
displacement as $\Delta_{m,u}^h(x,\xi)$. Let the capability Jacobian be
$L_c$-Lipschitz along every segment traversed by this displacement. Suppose the
family-specific coordinate $a_{m,u}\in\R^r$ and a checkpoint-specific update
map $G_m(x)$ obey
\begin{equation}
 \begin{aligned}
 \left\|\frac{\mathbb E\Delta_{m,u}^h}{h}-G_m(x)a_{m,u}\right\|_2
 &\leq\epsilon_g,\\
 \mathbb E\|\Delta_{m,u}^h\|_2^2
 &\leq h^2V^2\|a_{m,u}\|_2^2.
 \end{aligned}
 \label{eq:update-factor}
\end{equation}

\begin{theorem}[Smooth dynamics induce the response factorization]
\label{thm:smooth-factor}
Under Eq. \eqref{eq:update-factor}, the response in Eq. \eqref{eq:response}
has the form
\begin{equation}
 \begin{aligned}
 r_m^h(x,u)&=B_m(x)a_{m,u}+d_m(x,u,h),\\
 B_m(x)&=Jc_m(x)G_m(x),
 \end{aligned}
 \label{eq:local}
\end{equation}
with
\begin{equation}
 \|d_m(x,u,h)\|_2
 \leq \|Jc_m(x)\|_2\epsilon_g
 +\tfrac12L_chV^2\|a_{m,u}\|_2^2.
 \label{eq:local-remainder}
\end{equation}
\end{theorem}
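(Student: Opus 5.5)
The plan is a first-order Taylor expansion of $c_m$ along the random displacement, followed by taking expectations. In the local chart, write $U^h_{m,u}(x;\xi)=x+\Delta$ with $\Delta=\Delta^h_{m,u}(x,\xi)$. By the fundamental theorem of calculus along the segment $x+t\Delta$, $t\in[0,1]$,
\begin{equation*}
c_m(x+\Delta)-c_m(x)=Jc_m(x)\Delta+\int_0^1\bigl(Jc_m(x+t\Delta)-Jc_m(x)\bigr)\Delta\,dt .
\end{equation*}
The Jacobian is assumed $L_c$-Lipschitz on every traversed segment. Hence the integral remainder $R(\xi)$ satisfies
\begin{equation*}
\|R(\xi)\|_2\le\int_0^1 L_c t\|\Delta\|_2^2\,dt=\tfrac12L_c\|\Delta\|_2^2 .
\end{equation*}

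Next, take expectations over $\xi$ and divide by $h$. Since $Jc_m(x)$ is deterministic, linearity gives
\begin{equation*}
r^h_m(x,u)=Jc_m(x)\,\frac{\mathbb E\Delta}{h}+\frac{\mathbb E R}{h}.
\end{equation*}
Add and subtract $Jc_m(x)G_m(x)a_{m,u}=B_m(x)a_{m,u}$. This defines
\begin{equation*}
d_m=Jc_m(x)\Bigl(\tfrac{\mathbb E\Delta}{h}-G_m(x)a_{m,u}\Bigr)+\tfrac1h\mathbb ER ,
\end{equation*}
which is exactly the form in Eq.~\eqref{eq:local}.

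The triangle inequality and operator-norm submultiplicativity bound the first term by $\|Jc_m(x)\|_2\epsilon_g$, using the first line of Eq.~\eqref{eq:update-factor}. For the second term, Jensen's inequality gives
\begin{equation*}
\|\mathbb ER\|_2\le\mathbb E\|R\|_2\le\tfrac12L_c\,\mathbb E\|\Delta\|_2^2\le\tfrac12L_ch^2V^2\|a_{m,u}\|_2^2 ,
\end{equation*}
by the second line of Eq.~\eqref{eq:update-factor}. Dividing by $h$ yields $\tfrac12L_chV^2\|a_{m,u}\|_2^2$, and combining the two terms gives Eq.~\eqref{eq:local-remainder}.

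The main obstacle is measure-theoretic bookkeeping rather than the estimates themselves. I need $\Delta$ to be integrable so that $\mathbb E\Delta$ and $\mathbb E c_m(U^h)$ exist and can be split linearly. I also need the segment $x+t\Delta$ to stay within the chart where the Lipschitz hypothesis holds, which the statement grants by assumption. Integrability of $\Delta$ follows from the finite second moment via Cauchy–Schwarz. Integrability of $R$ follows from the pointwise quadratic bound. Together these justify exchanging expectation and linear maps, and the argument closes.
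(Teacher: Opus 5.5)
Your proposal is correct and follows the paper's proof step for step. Both use the integral Taylor formula with the Lipschitz-Jacobian remainder bound $\tfrac12L_c\|\Delta\|_2^2$, take expectations and divide by $h$, insert $G_m(x)a_{m,u}$ with its $\epsilon_g$ residual, and control the curvature term through the second-moment assumption; your integrability remarks make explicit what the paper leaves implicit.
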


The first term measures departure of the expected update from the
family-specific coordinate factorization; the second is the finite-radius
curvature cost. Training randomness enters through the expected displacement
and its second moment, while repeat averaging controls the empirical estimation
noise below. Cross-family sharing is governed separately by
Eq. \eqref{eq:sharedness-hypothesis}.

\subsection{Identifying the Checkpoint Response Operator}

For a fixed family, let $k$ target-independent pulses $q_1,\ldots,q_k$ have
coordinate matrix $Q_m=[a_{m,q_1},\ldots,a_{m,q_k}]\in\R^{r\times k}$.
Their population and empirical response matrices satisfy
\begin{equation}
 Y_m=B_mQ_m+D_m,\qquad \widehat Y_m=B_mQ_m+E_m,
 \qquad E_m=D_m+Z_m,
 \label{eq:pulse-matrix}
\end{equation}
where $D$ collects local residuals and $Z$ collects training and evaluation
noise.

\begin{theorem}[Pulse identifiability]
\label{thm:pulse-ident}
For a fixed target coordinate $a_\star$, the response $Ba_\star$ is uniquely
determined from $BQ$ for every $B\in\R^{d\times r}$ if and only if
$a_\star\in\operatorname{col}(Q)$. Hence every action in $\R^r$ is identifiable
if and only if $Q$ has full row rank. In that case,
$\widehat B=\widehat YQ^\dagger$ satisfies
\begin{equation}
 \|\widehat B-B\|_{\nu}
 \leq \frac{\|E\|_{\nu}}{\sigma_r(Q)},
 \qquad \nu\in\{2,F\}.
 \label{eq:ident}
\end{equation}
\end{theorem}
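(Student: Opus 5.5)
The plan has three parts: the identifiability equivalence for a single action, the full-row-rank corollary, and the perturbation bound.

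\textbf{Single action.} I will view the data as the linear map $\Phi_Q:B\mapsto BQ$ on $\R^{d\times r}$. The response $Ba_\star$ is determined by $BQ$ for every $B$ exactly when $Ba_\star$ vanishes on $\ker\Phi_Q$, by linearity.

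For sufficiency, suppose $a_\star=Qw$ for some $w$. Then $Ba_\star=(BQ)w$ is an explicit function of the observation. For necessity, suppose $a_\star\notin\operatorname{col}(Q)$. Split $a_\star=Pa_\star+v$, where $P$ is the orthogonal projector onto $\operatorname{col}(Q)$ and $v\neq0$ lies in $\operatorname{col}(Q)^\perp=\ker Q^\top$. Pick any nonzero $z\in\R^d$ and set $\Delta=zv^\top$. Then $\Delta Q=z(Q^\top v)^\top=0$, while $\Delta a_\star=z\|v\|_2^2\neq0$. So $B$ and $B+\Delta$ produce the same $BQ$ but different target responses, and $Ba_\star$ is not identified.

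\textbf{Full row rank.} Applying the single-action result to every $a_\star\in\R^r$ shows that all actions are identifiable iff $\operatorname{col}(Q)=\R^r$. This is equivalent to $\operatorname{rank}Q=r$, i.e.\ $Q$ has full row rank.

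\textbf{Perturbation bound.} Under full row rank, $QQ^\top$ is invertible, $Q^\dagger=Q^\top(QQ^\top)^{-1}$, and $QQ^\dagger=I_r$. Substituting the model $\widehat Y=BQ+E$ from Eq.~\eqref{eq:pulse-matrix} gives
\begin{equation*}
\widehat B=\widehat YQ^\dagger=B+EQ^\dagger .
\end{equation*}
For either $\nu\in\{2,F\}$ I then use $\|EQ^\dagger\|_\nu\le\|E\|_\nu\|Q^\dagger\|_2$. For the spectral norm this is submultiplicativity. For the Frobenius norm it follows by bounding each row of $E$ after multiplication by $Q^\dagger$ using the operator norm, then summing the squares. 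Finally, the nonzero singular values of $Q^\dagger$ are the reciprocals of those of $Q$, and $Q$ has exactly $r$ nonzero singular values, so $\|Q^\dagger\|_2=1/\sigma_r(Q)$. This yields Eq.~\eqref{eq:ident}.

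I do not expect a real obstacle. The most delicate step is the quantifier in the necessity direction: the statement is a ``for every $B$'' uniqueness claim, so the counterexample must show non-uniqueness for some pair of matrices. The rank-one perturbation $zv^\top$ supplies exactly such a pair, for any base $B$.
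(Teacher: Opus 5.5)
Your proposal is correct and follows the paper's proof essentially step for step: the same column-space argument for sufficiency, the same rank-one null-space perturbation for necessity (your $zv^\top$ is the paper's $wz^\top$, with the null-space vector chosen as the component $v$ of $a_\star$ in $\ker Q^\top$), and the same identity $\widehat B-B=EQ^\dagger$ combined with $\|Q^\dagger\|_2=1/\sigma_r(Q)$. Your row-wise justification of $\|EQ^\dagger\|_F\le\|E\|_F\|Q^\dagger\|_2$ spells out the mixed-norm inequality that the paper's one-word appeal to submultiplicativity leaves implicit in the Frobenius case.
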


\begin{theorem}[Minimax pulse conditioning]
\label{thm:pulse-opt}
For full-row-rank $Q$, the exact worst-case spectral amplification is
\begin{equation}
 \sup_{\|E\|_2\leq1}\|EQ^\dagger\|_2
 =\frac{1}{\sigma_r(Q)}.
\end{equation}
Under $\|Q\|_F^2\leq E_0$ and $k\geq r$, its minimum is
$\sqrt{r/E_0}$, attained exactly by tight frames satisfying
$QQ^\top=(E_0/r)I_r$.
\end{theorem}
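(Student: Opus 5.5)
The argument splits into two parts: the exact worst-case amplification identity, and the minimization of $1/\sigma_r(Q)$ under the energy budget.

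\emph{Amplification identity.} Since $Q\in\R^{r\times k}$ has full row rank, write the thin SVD $Q=U\Sigma V^\top$ with $U\in\R^{r\times r}$ orthogonal, $\Sigma=\operatorname{diag}(\sigma_1,\ldots,\sigma_r)$ positive, and $V\in\R^{k\times r}$ with orthonormal columns. Then $Q^\dagger=V\Sigma^{-1}U^\top$ and $\|Q^\dagger\|_2=1/\sigma_r(Q)$.

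\emph{Upper bound.} Submultiplicativity gives $\|EQ^\dagger\|_2\le\|E\|_2\|Q^\dagger\|_2\le 1/\sigma_r(Q)$ whenever $\|E\|_2\le 1$.

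\emph{Attainment.} Let $v_r$ be the $r$-th column of $V$, so that $Q^\dagger{}^\top v_r=u_r/\sigma_r$. Choose $E=e\,v_r^\top$ for any unit vector $e\in\R^d$. This $E$ is rank one with $\|E\|_2=1$, and
\[
EQ^\dagger=e\,v_r^\top V\Sigma^{-1}U^\top=e\,u_r^\top/\sigma_r,
\]
whose spectral norm is exactly $1/\sigma_r$. So the supremum is attained and equals $1/\sigma_r(Q)$. Here $E$ is $d\times k$, which fits the dimensions in Eq.~\eqref{eq:pulse-matrix}.

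\emph{Minimization under the budget.} We have $\|Q\|_F^2=\sum_{i=1}^r\sigma_i^2\ge r\,\sigma_r^2$. Hence $\|Q\|_F^2\le E_0$ forces $\sigma_r\le\sqrt{E_0/r}$, and so $1/\sigma_r\ge\sqrt{r/E_0}$. The hypothesis $k\ge r$ is needed for full row rank to be possible at all.

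\emph{Achievability.} Take $Q=\sqrt{E_0/r}\,[I_r\ 0]$, or more generally any $Q$ with $QQ^\top=(E_0/r)I_r$. Such a $Q$ has all singular values equal to $\sqrt{E_0/r}$, so $\|Q\|_F^2=E_0$ and the bound is met.

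\emph{Exactness of the equality case.} This step needs the most care. Equality $1/\sigma_r=\sqrt{r/E_0}$ requires $\sigma_r^2=E_0/r$. Combined with $\sum_i\sigma_i^2\le E_0$ and $\sigma_i\ge\sigma_r$ for all $i$, this forces every $\sigma_i^2=E_0/r$, and the budget is then tight. Consequently
\[
QQ^\top=U\Sigma^2U^\top=(E_0/r)I_r,
\]
which is the tight-frame condition. Conversely, every tight frame with this constant attains the minimum, as shown above. Hence the minimizers are exactly the tight frames with $QQ^\top=(E_0/r)I_r$.
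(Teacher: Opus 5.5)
Your proof is correct and follows essentially the same route as the paper's. You use submultiplicativity plus a rank-one $E=e\,v_r^\top$ aligned with the leading left singular vector of $Q^\dagger$ to get the exact amplification $1/\sigma_r(Q)$, and then $\sigma_r(Q)^2\le\|Q\|_F^2/r\le E_0/r$, with equality exactly when the budget is tight and all singular values coincide, i.e.\ $QQ^\top=(E_0/r)I_r$; you simply write out the attaining $E$ and both directions of the equality case more explicitly than the paper does.
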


Our estimator uses the same four named standard pulses as coordinate anchors,
giving $Q_m=I_4$ separately in every family. This gauge convention aligns the
meanings of the four axes. Cross-family equality of target-action coefficients
remains the separate hypothesis $H_{\rm share}$. The convention also removes an explicit
$Q_m^\dagger$ factor from operator recovery. With an independently fixed
action metric and energy budget, \cref{thm:pulse-opt} selects tight-frame
interventions. With common
pulse strength $h$, the stored differences
$R=h\widehat Y$ change only the global scale. The empirical state is
\begin{equation}
 z_m(x)=\big[\widehat c_m(x);\operatorname{vec}R_m(x)\big]\in\R^{25}.
 \label{eq:lstate}
\end{equation}
Every pulse branches from an independent copy of the checkpoint.

\subsection{Estimating a Pooled Action under Family Heterogeneity}

Enumerate the source checkpoints as
$\mathcal S=\{s_1,\ldots,s_n\}$, where $s_i$ belongs to family $m(s_i)$, and
write $\delta_{s_i,u}=\beta_{m(s_i)}+\gamma_{m(s_i),u}$. Stack source operators
and responses as
\begin{equation}
 \begin{aligned}
 \mathcal B_S&=\begin{bmatrix}B_{s_1}\\ \vdots\\ B_{s_n}\end{bmatrix},
 &y_S(u)&=\mathcal B_S\bar a_u+v^{\rm het}_{S,u}+e_S,\\
 v^{\rm het}_{S,u}&=\begin{bmatrix}B_{s_1}\delta_{s_1,u}\\ \vdots\\
 B_{s_n}\delta_{s_n,u}\end{bmatrix},
 &G_S&=\mathcal B_S^\top\mathcal B_S=\sum_s B_s^\top B_s.
 \end{aligned}
 \label{eq:source-stack}
\end{equation}
The source-coverage radius is
$\underline\sigma_S=\sigma_r(\mathcal B_S)=\sqrt{\lambda_{\min}(G_S)}$, while
$H_{S,u}=\|v^{\rm het}_{S,u}\|_2$ measures the linear response-scale cost of imposing a pooled
coordinate on heterogeneous source families.

\begin{theorem}[Approximate pooled-coordinate recovery under bounded heterogeneity]
\label{thm:source-coordinate}
Assume $\underline\sigma_S>0$, $\|e_S\|_2\leq\epsilon_S$,
$\|\bar a_u\|_2\leq A$, and
$\|v^{\rm het}_{S,u}\|_2\leq H_{S,u}$. With exact source operators,
$\widehat a_u=\mathcal B_S^\dagger y_S(u)$ obeys
$\|\widehat a_u-\bar a_u\|_2\leq(H_{S,u}+\epsilon_S)/\underline\sigma_S$.
If $\|\widehat{\mathcal B}_S-\mathcal B_S\|_2\leq\tau_S<\underline\sigma_S$, then
\begin{equation}
 \widehat a_u=\widehat{\mathcal B}_S^\dagger y_S(u),\qquad
 \|\widehat a_u-\bar a_u\|_2
 \leq\frac{\tau_S A+H_{S,u}+\epsilon_S}{\underline\sigma_S-\tau_S}.
 \label{eq:coordinate-bound}
\end{equation}
Under $H_{\rm share}$, $H_{S,u}=0$ and the original shared-coordinate bound is
recovered as a special case. The converse need not hold: $H_{S,u}$ measures
only the linear response-scale mismatch on the observed source operators.
\end{theorem}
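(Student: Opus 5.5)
The plan is to treat this as a perturbed least-squares problem and use that $\mathcal B_S$ has full column rank $r$ (since $\underline\sigma_S>0$), so that $\mathcal B_S^\dagger\mathcal B_S=I_r$ and $\|\mathcal B_S^\dagger\|_2=1/\underline\sigma_S$.

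\textbf{Exact operators.} Substituting the source model of Eq.~\eqref{eq:source-stack} gives
\[
\widehat a_u-\bar a_u=\mathcal B_S^\dagger\bigl(v^{\rm het}_{S,u}+e_S\bigr).
\]
The triangle inequality then yields $(H_{S,u}+\epsilon_S)/\underline\sigma_S$ directly.

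\textbf{Perturbed operators.} Write $\widehat{\mathcal B}_S=\mathcal B_S+\Delta$ with $\|\Delta\|_2\le\tau_S$.
\begin{itemize}
\item By Weyl's inequality for singular values, $\sigma_r(\widehat{\mathcal B}_S)\ge\underline\sigma_S-\tau_S>0$. Hence $\widehat{\mathcal B}_S$ also has full column rank, $\widehat{\mathcal B}_S^\dagger\widehat{\mathcal B}_S=I_r$, and $\|\widehat{\mathcal B}_S^\dagger\|_2\le 1/(\underline\sigma_S-\tau_S)$.
\item Rewrite the observation in terms of the estimated operator:
\[
y_S(u)=\widehat{\mathcal B}_S\bar a_u-\Delta\bar a_u+v^{\rm het}_{S,u}+e_S.
\]
\item Applying $\widehat{\mathcal B}_S^\dagger$ gives
\[
\widehat a_u-\bar a_u=\widehat{\mathcal B}_S^\dagger\bigl(-\Delta\bar a_u+v^{\rm het}_{S,u}+e_S\bigr).
\]
The norm of the bracket is at most $\tau_SA+H_{S,u}+\epsilon_S$, which gives Eq.~\eqref{eq:coordinate-bound}.
\end{itemize}
The key move is to keep the perturbation inside the residual rather than bounding $\widehat{\mathcal B}_S^\dagger-\mathcal B_S^\dagger$. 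The latter route would require Wedin-type pseudoinverse perturbation bounds and give a worse constant. The main point needing care is the Weyl step that guarantees rank preservation, because this is what makes the left-inverse identity valid.

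\textbf{Special case.} Under $H_{\rm share}$ every $\delta_{s,u}=0$, so $v^{\rm het}_{S,u}=0$ and one may take $H_{S,u}=0$. This recovers the shared-coordinate bound $(\tau_SA+\epsilon_S)/(\underline\sigma_S-\tau_S)$.

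\textbf{Failure of the converse.} It suffices to give an example with $H_{S,u}=0$ but nonzero deviations. Take deviations $\delta_{s,u}\neq0$ lying in $\ker B_s$ for each source checkpoint, which requires $B_s$ to be rank-deficient individually even though the stack has full column rank. Then $v^{\rm het}_{S,u}=0$ while $\beta_m$ or $\gamma_{m,u}$ is nonzero. The example must also respect the centering constraints of Eq.~\eqref{eq:coordinate-centering}. For instance, use two families with opposite deviations $\pm\delta$, where $\delta$ lies in the kernels of both families' operators and those operators are arranged so that the stack still has full column rank.
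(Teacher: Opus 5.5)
Your proof of both bounds is correct and is essentially the paper's argument. It uses the exact-operator identity $\widehat a_u-\bar a_u=\mathcal B_S^\dagger(v^{\rm het}_{S,u}+e_S)$, then Weyl's inequality to get $\sigma_r(\widehat{\mathcal B}_S)\geq\underline\sigma_S-\tau_S>0$. It rewrites $y_S(u)$ around $\widehat{\mathcal B}_S$ so the operator error stays inside the residual, and finishes with $\|\widehat{\mathcal B}_S^\dagger\|_2\leq1/(\underline\sigma_S-\tau_S)$. The $H_{\rm share}$ special case is also fine.

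The flaw is your concrete example for the failure of the converse, which contradicts itself as written. The kernel of a stacked matrix is the intersection of its block kernels, $\ker\mathcal B_S=\bigcap_s\ker B_s$. So a nonzero $\delta$ annihilated by every block of both source families lies in $\ker\mathcal B_S$, which is $\{0\}$ because $\underline\sigma_S>0$. No arrangement of those two families' operators can make the stack full column rank.

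Worse, if the two families form both the reference set and the source set, no kernel-type example exists for action $u$ at all. The centering constraints imply $\sum_m\pi_m\delta_{m,u}=0$, so $\delta_{2,u}=-(\pi_1/\pi_2)\delta_{1,u}$ is collinear with $\delta_{1,u}$. Then $v^{\rm het}_{S,u}=0$ would force that common direction into $\ker\mathcal B_S$.

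A valid version adds a third source family whose operators have full column rank and whose deviation is zero:
\begin{itemize}
\item take $\pi_1=\pi_2$ and $\beta_m=0$ for all $m$;
\item set $\gamma_{1,u}=-\gamma_{2,u}=\delta\neq0$, with $\delta$ in the kernel of every block of families 1 and 2, and $\gamma_{3,u}=0$;
\item restore the action centering with $\gamma_{m,u'}=-(\omega_u/\omega_{u'})\gamma_{m,u}$ for one other action $u'$.
\end{itemize}
Simpler still, note that $H_{\rm share}$ quantifies over every family and action, whereas $H_{S,u}$ involves only action $u$ on source checkpoints. Nonzero centered interactions on two other actions, with $\beta_m=0$ and $\gamma_{m,u}=0$, already give $H_{S,u}=0$ while $H_{\rm share}$ fails. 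The paper only asserts this remark without an example, so the defect is confined to your illustration, not to the bound.
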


Adding a source checkpoint changes the coverage Gram from $G_S$ to
$G_S+B_{\rm new}^\top B_{\rm new}$, so $\lambda_{\min}$ is nondecreasing. A
full-column-rank new source increases the lower bound by at least
$\sigma_r(B_{\rm new})^2$. Coverage alone, however, does not establish
sharedness: the added family can also increase $H_{S,u}$. Additional sources
tighten the complete bound only when their coverage gain outpaces operator,
response, and coordinate-heterogeneity errors.

\subsection{Prospective Transfer to an Unseen Family}

Let $\star$ denote a previously unseen family. Its target-independent pulses
give $\widehat B_\star$, while source-family target responses give
$\widehat a_u$, an estimate of the pooled coordinate $\bar a_u$. The prediction is
$\widehat r_\star(u)=\widehat B_\star\widehat a_u$.

\begin{theorem}[End-to-end cross-family transfer]
\label{thm:end-to-end}
Let $Q_\star$ be expressed in the same fixed named-pulse gauge, have full row
rank, and let $\widehat B_\star=\widehat Y_\star Q_\star^\dagger$. Assume
$\|\bar a_u\|_2\leq A$, $a_{\star,u}=\bar a_u+\delta_{\star,u}$ with
$\|\delta_{\star,u}\|_2\leq D_{\star,u}$,
$\|r_\star^h(u)-B_\star a_{\star,u}\|_2\leq\epsilon_{\star,u}$, target pulse
perturbation $\|E_\star\|_2\leq\eta_\star$, and the source conditions of
\cref{thm:source-coordinate}. Then
\begin{equation}
 \|\widehat r_\star(u)-r_\star^h(u)\|_2\leq\Delta_{\star,u},
 \label{eq:e2e}
\end{equation}
where
\begin{equation}
 \begin{aligned}
 \Delta_{\star,u}={}&\frac{\eta_\star A}{\sigma_r(Q_\star)}+\epsilon_{\star,u}\\
 &+\|B_\star\|_2D_{\star,u}\\
 &+\left(\|B_\star\|_2+\frac{\eta_\star}{\sigma_r(Q_\star)}\right)
 \frac{\tau_S A+H_{S,u}+\epsilon_S}{\underline\sigma_S-\tau_S}.
 \end{aligned}
 \label{eq:e2e-expanded}
\end{equation}
\end{theorem}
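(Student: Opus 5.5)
The argument is a telescoping decomposition of the prediction error, followed by the triangle inequality and the earlier bounds. Write $\widehat B_\star=B_\star+\Delta B_\star$ with $\Delta B_\star=E_\star Q_\star^\dagger$. This form uses $\widehat Y_\star=B_\star Q_\star+E_\star$ from Eq.~\eqref{eq:pulse-matrix} and the full-row-rank identity $Q_\star Q_\star^\dagger=I_r$. By \cref{thm:pulse-ident}, $\|\Delta B_\star\|_2\leq\eta_\star/\sigma_r(Q_\star)$. Also write $\widehat a_u=\bar a_u+\Delta a_u$. By \cref{thm:source-coordinate}, $\|\Delta a_u\|_2\leq\rho_u:=(\tau_S A+H_{S,u}+\epsilon_S)/(\underline\sigma_S-\tau_S)$. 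The target response is $r_\star^h(u)=B_\star a_{\star,u}+(r_\star^h(u)-B_\star a_{\star,u})$ with $a_{\star,u}=\bar a_u+\delta_{\star,u}$.

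Expanding the product gives
\begin{equation*}
\widehat r_\star(u)-r_\star^h(u)=\Delta B_\star\bar a_u+(B_\star+\Delta B_\star)\Delta a_u-B_\star\delta_{\star,u}-(r_\star^h(u)-B_\star a_{\star,u}).
\end{equation*}
To check this identity, note that $\widehat B_\star\widehat a_u=B_\star\bar a_u+\Delta B_\star\bar a_u+\widehat B_\star\Delta a_u$. Subtracting $B_\star\bar a_u+B_\star\delta_{\star,u}+(r_\star^h-B_\star a_{\star,u})$ leaves exactly the four terms above.

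Each term is then bounded in operator norm:
\begin{itemize}
\item The first term is at most $\eta_\star A/\sigma_r(Q_\star)$.
\item The second term is at most $(\|B_\star\|_2+\eta_\star/\sigma_r(Q_\star))\rho_u$.
\item The third term is at most $\|B_\star\|_2D_{\star,u}$.
\item The fourth term is at most $\epsilon_{\star,u}$.
\end{itemize}
Summing these four bounds reproduces Eq.~\eqref{eq:e2e-expanded} exactly.

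The main point needing care is the bookkeeping of which operator multiplies which error, rather than any real obstacle. Pairing $\Delta a_u$ with $\widehat B_\star$ rather than with $B_\star$ is what produces the factor $\|B_\star\|_2+\eta_\star/\sigma_r(Q_\star)$ and the separate $\eta_\star A$ term, matching the stated form. Two further checks are needed. First, $Q_\star$ and the source coordinates must live in the same named-pulse gauge, so that $\bar a_u$ estimated from sources is the right vector to multiply $B_\star$. This is exactly what the gauge hypothesis and Eq.~\eqref{eq:coordinate-decomposition} provide. Second, \cref{thm:source-coordinate} must be invoked under its condition $\tau_S<\underline\sigma_S$, which is included in the assumed source conditions.
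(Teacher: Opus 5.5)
Your proposal is correct and follows the paper's proof essentially line for line. Both use the same four-term decomposition $(\widehat B_\star-B_\star)\bar a_u+\widehat B_\star(\widehat a_u-\bar a_u)-B_\star\delta_{\star,u}-d_{\star,u}$ and bound it termwise via \cref{thm:pulse-ident}, \cref{thm:source-coordinate}, and $\|\widehat B_\star\|_2\leq\|B_\star\|_2+\eta_\star/\sigma_r(Q_\star)$; your explicit check of the identity and of the resulting coefficient only makes visible bookkeeping that the paper leaves implicit.
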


Equation \eqref{eq:e2e-expanded} separates target identification, target
locality, unseen-family coordinate bias, and source heterogeneity. Appendix
\ref{app:proofs} gives the full
proofs, a finite-repeat version, and the ridge-regularized source estimator used
by the operator readout. For that estimator, the same theorem holds after
replacing the last fraction in Eq. \eqref{eq:e2e-expanded} by the ridge
coordinate radius in Eq. \eqref{eq:ridge-coordinate}. If scoring uses a
realized response $\widetilde r_\star(u)=r_\star^h(u)+\omega_u$ with
$\|\omega_u\|_2\leq\kappa_u$, its total radius is
$\overline\Delta_{\star,u}=\Delta_{\star,u}+\kappa_u$.

\begin{corollary}[Operator direction and action recovery]
\label{cor:decisions}
If $|r_{\star,j}^h(u)|>\Delta_{\star,u}$, then
$\operatorname{sign}\widehat r_{\star,j}(u)=
\operatorname{sign}r_{\star,j}^h(u)$. For a finite candidate set with
$J(u)=h v^\top r_\star^h(u)-\operatorname{cost}(u)$, simultaneous radii imply
\begin{equation}
 J(u^\star)-J(\widehat u)
 \leq h\|v\|_2\big(\Delta_{\star,u^\star}
 +\Delta_{\star,\widehat u}\big).
 \label{eq:action-regret}
\end{equation}
If the true best-versus-runner-up utility margin exceeds
$h\|v\|_2(\Delta_{\star,u^\star}+\max_{u\neq u^\star}\Delta_{\star,u})$,
then $\widehat u=u^\star$.
Balanced-accuracy, normalized-regret, and realized-response versions appear in
Appendix \ref{app:proofs}.
\end{corollary}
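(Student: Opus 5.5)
All three claims follow from \cref{thm:end-to-end}, applied once per action in the finite candidate set; no new estimation argument is needed. We assume throughout that the radii hold simultaneously, that is, $\|\widehat r_\star(u)-r_\star^h(u)\|_2\le\Delta_{\star,u}$ for every candidate $u$. Define the predicted utility $\widehat J(u)=h v^\top\widehat r_\star(u)-\operatorname{cost}(u)$ and the selected action $\widehat u\in\arg\max_u\widehat J(u)$. The cost is known, so the same cost term appears in $J$ and $\widehat J$.

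\emph{Sign claim.} Fix a coordinate $j$. Since coordinate error is bounded by the $\ell_2$ error,
\[
|\widehat r_{\star,j}(u)-r_{\star,j}^h(u)|\le\|\widehat r_\star(u)-r_\star^h(u)\|_2\le\Delta_{\star,u}<|r_{\star,j}^h(u)|.
\]
A perturbation strictly smaller than $|r^h_{\star,j}(u)|$ cannot reach or cross zero. Hence $\widehat r_{\star,j}(u)$ is nonzero and has the same sign as $r^h_{\star,j}(u)$.

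\emph{Regret claim.} The cost terms cancel, so by Cauchy--Schwarz
\[
|\widehat J(u)-J(u)|=h|v^\top(\widehat r_\star(u)-r_\star^h(u))|\le h\|v\|_2\Delta_{\star,u}\quad\text{for every }u.
\]
Telescope the regret as
\[
J(u^\star)-J(\widehat u)=[J(u^\star)-\widehat J(u^\star)]+[\widehat J(u^\star)-\widehat J(\widehat u)]+[\widehat J(\widehat u)-J(\widehat u)].
\]
The middle bracket is $\le0$ because $\widehat u$ maximizes $\widehat J$. The outer brackets are at most $h\|v\|_2\Delta_{\star,u^\star}$ and $h\|v\|_2\Delta_{\star,\widehat u}$, respectively. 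Adding them gives Eq.~\eqref{eq:action-regret}.

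\emph{Exact recovery.} Suppose, for contradiction, that $\widehat u=u'\neq u^\star$. By definition of the runner-up, $J(u^\star)-J(u')$ is at least the best-versus-runner-up margin, which by hypothesis exceeds $h\|v\|_2(\Delta_{\star,u^\star}+\max_{u\ne u^\star}\Delta_{\star,u})$. The regret bound gives
\[
J(u^\star)-J(u')\le h\|v\|_2(\Delta_{\star,u^\star}+\Delta_{\star,u'})\le h\|v\|_2\Big(\Delta_{\star,u^\star}+\max_{u\ne u^\star}\Delta_{\star,u}\Big).
\]
This contradicts the strict margin, so $\widehat u=u^\star$.

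The realized-response versions use the same three arguments with $\Delta_{\star,u}$ replaced by $\overline\Delta_{\star,u}=\Delta_{\star,u}+\kappa_u$, via the triangle inequality. The argument is elementary; its only delicate point is the bookkeeping of simultaneity. The regret and recovery bounds use the radii at the data-dependent action $\widehat u$, so the bound must hold for all candidates at once rather than for a single fixed action. This is exactly why the statement assumes simultaneous radii. If they were obtained probabilistically, a union bound over the finite candidate set would be needed.
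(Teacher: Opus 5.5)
Your proposal is correct and follows the paper's proof step for step. The sign claim comes from bounding the coordinatewise error by the $\ell_2$ radius, and the regret bound from adding and subtracting predicted utilities and using the optimality of $\widehat u$; simultaneity and the union bound are handled the same way. The only cosmetic difference is exact recovery: you derive it by contradiction from the regret bound, whereas the paper directly shows that each predicted gap $\widehat J(u^\star)-\widehat J(u)$ is at least the true gap minus $h\|v\|_2(\Delta_{\star,u^\star}+\Delta_{\star,u})$, which is positive. The two arguments are equivalent.
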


\subsection{Two Pulse Readouts and Task-Matched Endpoints}
\label{sec:endpoint_policy}

\paragraph{Direct pulse readout.}
Let $p(s)=\operatorname{vec}R(s)$ denote the 20-dimensional pulse block of the
L-State at checkpoint $s$. For each target action, this readout fits a source-only
multi-output ridge map from $p(s)$ to the five-dimensional target response.
It lets the data learn an unconstrained linear combination of the measured
pulse responses.

\paragraph{Operator readout.}
We ridge-fit a target coordinate from source responses and predict
$\widehat y_{\rm op}(s,u)=R(s)\widehat a_u$. This preserves a zero point and
constrains predictions to the local operator geometry measured by the same
L-State pulse block. This is the structured predictor covered by Theorem
\ref{thm:end-to-end}.

\paragraph{Full L-State readout.}
For each action, a multi-output ridge map also predicts the response from the
complete 25-dimensional state $z(s)=[c(s);p(s)]$. All regularization and
standardization choices for the direct, operator, and full-state readouts use
complete source
trajectories. Appendix \ref{app:protocol} gives the objectives.

\paragraph{Shared-coordinate hypothesis audit.}
After all five family labels are open, we fit one coordinate per observed
family--action cell and apply Eq. \eqref{eq:coordinate-decomposition}. Here the
five observed families have equal weight, so their retrospective mean
$\bar a_u^{(5)}$ is distinct from the prospective source-reference mean.
Appendix \ref{app:sharedness_audit} defines the common-scale pooling error
$D_u$, shared-signal fraction $S_u$, and dead-zone error $D_u^{\rm DZ}$, and
gives the exact difference, retrospective equivalence, and predictive tests.
Those tests use unpenalized least squares; ridge is used only for coordinate
stability summaries. Thus $H_{\rm share}$ is tested directly.

\paragraph{Action-wise pulse selector.}
The first sealed family reveals that the source-calibrated operator can be
action-heterogeneous. We therefore compare the direct and operator readouts
separately for each semantic action using the three held-out development-family
folds. The direct readout is selected after a strict response-RMSE win in at
least two of the three folds; ties retain the operator readout. This rule selects
the operator readout for mathematics and reading and the direct readout for code
and science. The selector uses development-family responses
and is serialized before the second sealed family's target branches begin.
The deployed endpoint policy uses this selector for continuous response, the
operator readout for direction, and the full-state map for utility-conditioned
action scoring.

\paragraph{Empirical consequences.}
The theorem chain calls for a capability-only reference, pulse reliability and
geometry checks, direct sharedness diagnostics, and end-to-end response, sign,
and utility tests. The protocol evaluates them in that order and freezes each
held-out family's predictions before opening its target-action responses.

\section{Theory-Guided Experimental Design}

The experiments compare the two pulse readouts before testing the structured
transfer chain. Development measures the capability gap and the aggregate gains
of the direct and operator readouts; the first seal tests the composed operator
prediction and exposes an action-specific calibration inversion; repeat
interventions locate that failure; and the second seal tests the frozen
action-wise response route.

\subsection{Development Families and Sealed Families}

The development study uses Qwen2.5-7B-Instruct \citep{qwen2025report},
Mistral-7B-Instruct-v0.3 \citep{jiang2023mistral,mistralai2024mistralv03}, and
OLMo-2-1124-7B-Instruct \citep{olmo2025furious}. Each family contributes three
frozen seeds. Every seed starts from a zero-output rank-4 LoRA adapter
\citep{hu2022lora} and follows eight balanced state-generation episodes,
yielding checkpoints $t00$ through $t08$. The development set therefore contains
27 states per family and 81 states in total.

After the initial endpoint rule is fixed, we apply the same state construction
and training contract to GLM-4-9B-0414 \citep{zhipuai2025glm49b0414}. This first
sealed family adds 27 held-out states from three new trajectories. The GLM
science-action inversion then motivates the action-wise pulse selector above. We fit
its action-wise choices exclusively on the three development families and seal
the resulting policy before running Granite-3.1-8B-Instruct at revision
\texttt{4009206d5fc9} \citep{ibmgranite2024granite31}. Granite supplies another
27 states from three fresh trajectories. The complete study covers 135 states
from five 7--9B instruction-tuned model families.

\subsection{Data Separation and Training Contract}

Data have disjoint roles. Four deterministic synthetic tasks generate state
trajectories. Four different synthetic tasks provide the standard pulses:
schema mapping, two-step rule chaining, symbolic rewriting, and table
aggregation; this pulse set is synthetic and disjoint from the target benchmark
tasks. Semantic target actions
train on 32 examples from GSM8K \citep{cobbe2021gsm8k}, MBPP
\citep{austin2021mbpp}, SciQ \citep{welbl2017sciq}, or BoolQ
\citep{clark2019boolq}. Capabilities are negative clipped token NLL on disjoint
held-out examples from those four tasks plus WikiText-2
\citep{merity2016wikitext}. Exact normalized-text hashes are disjoint across
state generation, pulse training, target training, and capability evaluation.

All primary pulse and target branches freeze base weights and train rank-4 LoRA
on attention query and value projections for eight AdamW steps, consuming 32
examples. The target-early reference reads a target-specific intermediate after two steps and eight
examples, while locality calibration deliberately includes four- and
sixteen-step variants. Each branch starts with reset optimizer moments and
restores an identical checkpoint hash. This short-LoRA protocol measures
immediate training response under one shared optimization contract.

\subsection{Family-Level Gating and Baselines}

In development, each outer fold trains on two model families and tests on the
third. The program serializes predictions and hashes before opening the fold's
label vault, while source-trajectory inner cross-validation selects
hyperparameters. The complete comparison includes a source-mean reference,
\caponly{}, a direct pulse readout, scalar training statistics, a
trajectory/index-matched source state, and the \ophead{} readout.
\textsc{Target Early} provides a separate target-specific reference.

Each sealed stage freezes its reporting rule, all 540 core response predictions,
and all 810 utility-conditioned action choices before target-action training.
The Granite stage additionally freezes 216 selector-extension rows that copy
the chosen direct or operator response and direction predictions. Both Granite freeze
receipts record zero target branches. The target vault then opens and scores
five core methods over 27 states, four actions, five response coordinates, and
six utilities. Current capability is the common reference. The direct readout
fits the 20-dimensional L-State pulse block, while the operator readout applies
the structured factorization to the same block. The matched-state reference averages
development responses at the same trajectory seed, state index, and action
identity.

\subsection{Metrics and Uncertainty}

Continuous response error is standardized using only the source-family mean
and population standard deviation for each outer fold, action, and coordinate;
the resulting RMSE is dimensionless. The later five-family sharedness audit
instead uses one common three-development-family scale for every family so that
all deviations have the same ruler. We report MSE gain as
$1-\mathrm{RMSE}_m^2/\mathrm{RMSE}_{\mathrm{cap}}^2$. Direction is balanced accuracy on response
components outside a coordinate-specific dead zone estimated from three-repeat
pulse reliability. We also report median response cosine.

For decisions, six fixed utility vectors---five one-hot capabilities and one
balanced vector---choose among the four semantic actions. We report top-1 action
accuracy and realized normalized regret. Because each semantic action has one
preregistered training realization, the decision metric measures realized
performance under the fixed training protocol.

The development analysis uses 10,000 family-then-trajectory bootstrap
resamples for full-state readout endpoints and 1,000 aligned-pulse permutations with
full source refitting. Each sealed-family core endpoint analysis uses 10,000
whole-trajectory resamples over its three trajectories. GLM direction is
evaluated over 184 components selected by source-frozen dead zones; Granite
uses the same frozen coordinate thresholds. The action-wise pulse selector is
summarized by frozen pooled and action-macro point estimates, while continuous
response retains all scored components. Four
additional GLM science-action responses per state test repeat stability, and a
separate nine-state study evaluates four-, eight-, and sixteen-step exposure.
Fresh processes reproduce the canonical development, GLM, and Granite analysis
hashes.

\subsection{Retrospective Five-Family Sharedness Audit}

This audit begins only after all five target-label vaults are open and leaves
every sealed prediction unchanged. Full-rank frozen action matrices recover the
  pulse operators algebraically, replaying operator-readout predictions below
  $10^{-9}$ error.
Primary comparisons use one development-frozen response scale, complete
trajectory blocks, and exact sign flips over the 15 observed
family--trajectory clusters. The audit is conditional on these five families,
fixed pulse measurements, and one training realization; Appendix
\ref{app:sharedness_audit} gives recovery, conditioning, resampling, and
sensitivity details.

\section{Results}

\subsection{L-State Readouts in Three-Family Development}

Development first asks whether the L-State pulse block carries transferable
response information and how readout structure changes the endpoint. The direct
readout uses the pulse block as features, the operator readout preserves its
matrix geometry, and current capability provides the reference. Across the
three LOFO folds, the direct and operator readouts have nearly identical
aggregate response error: RMSE 7.772 and 7.777 versus 9.987 for capability. Each
reduces source-standardized MSE by 39.4\%. The direct readout has the lowest
response-RMSE point estimate, while the operator readout has the highest
aggregate sign balanced accuracy, 0.675 versus 0.602 for capability. The full
L-State readout separately reduces
realized normalized regret from 0.472 to 0.248. The complete method table
appears in Appendix \ref{app:development_results}; aligned-pulse refits exceed
the registered per-fold null 95th percentile on Qwen and Mistral.

\subsection{Sealed GLM Test of the Operator Readout}
\label{sec:prospective}

The first sealed test asks whether the structured readout transfers without
target-action responses. We freeze 540 response predictions and 810 action
choices before opening GLM labels. The direct readout reaches RMSE 0.712, a 71.8\% MSE
reduction relative to capability. The operator readout reaches RMSE 0.624, a
78.3\% reduction, and has
the highest sign balanced-accuracy point estimate among the five frozen core
methods, increasing from 0.366 for capability to 0.754. The full L-State
readout lowers realized normalized regret from 0.592 to 0.434; the matched-state
reference attains 0.410. Complete frozen-method and
resampling comparisons appear in Appendices \ref{app:sealed_summary} and
\ref{app:prospective_details}.

\subsection{From an Operator Failure to Action-Wise Readout Selection}

The GLM science action separates the measured representation from the
structure imposed by the operator factorization. The operator readout has sign
BA zero, whereas the direct readout reaches 1.0 on the
determinate science coordinates. Four additional target realizations preserve
the target sign in all 27 states, while source-fit operator coordinates span
$-0.555$ to 0.322. This locates the instability on the source-coordinate side
without identifying a unique cause. Motivated by this sealed failure, we fit an
action-wise selector using only the three development families. It retains the
operator readout for mathematics and reading and chooses the direct readout for
code and science. The mapping is frozen before Granite labels open.

\subsection{Sealed Granite Test of Action-Wise Readout Selection}
\label{sec:granite}

Granite tests the frozen route on a second family. The direct readout has the
lowest point-estimate RMSE among the single readouts at 0.544; the operator
readout reaches 0.634, and the development-fitted selector reaches 0.554,
versus 1.172 for capability. The selector therefore preserves most of the
direct readout's aggregate response gain while improving over both capability
and the globally applied operator readout. The operator readout produces
non-degenerate action-wise direction estimates with aggregate sign BA 0.657,
compared with 0.654 for capability. The full L-State readout lowers regret from
0.291 to 0.217 and raises top-1 from 53.1\% to 63.6\%. Frozen receipts and
complete comparisons appear in Appendices \ref{app:sealed_summary} and
\ref{app:granite_details}.

\subsection{Testing the Operator Readout's Action Coordinate}
\label{sec:sharedness_results}

The direct and operator readouts use the same pulse measurement but place
different constraints on it. The operator readout estimates one pooled action
coordinate from source families, so we test that structure directly after all
five family labels are open. The
common-scale pooling error ranges from $D_u=0.342$ for mathematics to 1.038 for
reading; code has the highest shared-signal fraction ($S_u=0.800$), science the
lowest ($S_u=0.269$), and reading has negative mean coordinate cosine. Appendix
Table \ref{tab:sharedness_full} gives the full action-wise audit.

At level 0.05, exact sharing is rejected for code, science, and reading.
Mathematics remains unresolved by the difference test; all four
$D_u^{\rm DZ}$ values exceed one, so no action meets the retrospective
one-dead-zone equivalence criterion. Joint tests detect family main effects,
total deviation, and the family--action interaction increment (all
$p\leq3.05\times10^{-4}$). Shared, family-main, and full family--action models
have held-trajectory RMSE 0.824, 0.802, and 0.750; the full model reduces MSE by
17.3\% overall and 48.0\% for science. This fixed-family diagnosis explains how
family and action deviations affect the operator structure. Appendix
\ref{app:sharedness_audit} gives intervals, decomposition, and sensitivities.

\section{Conclusion}

Target-independent micro-interventions turn a checkpoint into a measured
training-response state. A direct readout learns a flexible map from its L-State
pulse block; an operator readout factors the same block into local response
geometry and an action coordinate. Both cut development MSE by 39.4\% relative
to capability alone. On GLM, the operator readout reaches RMSE 0.624 and sign
balanced accuracy 0.754; on Granite, the direct readout reaches RMSE 0.544 and
the frozen action-wise selector reaches 0.554, versus 1.172 for capability
alone. The theory supports pulse identification and conditional operator
transfer, while source families choose readouts by action and endpoint. The
five-family audit quantifies deviations in the pooled operator coordinate, and the full L-State
readout adds action-ranking gains. The central result is a transferable
intervention measurement supporting complementary direct and structured
readouts.

\subsection*{AI use statement}

Generative AI tools assisted with conceptual and theoretical development,
mathematical claims and proof drafting, experimental design, implementation,
execution, result analysis, figure creation, literature review, and manuscript
drafting, editing, and formatting. The authors reviewed all AI-assisted work,
tested the code, audited the proofs, replayed the frozen experimental evidence,
and checked the cited sources. The authors take responsibility for the final
content, including text, claims, code, and artifacts produced with generative-AI
assistance.

\subsection*{Reproducibility statement}

Sections 4--5 specify the data separation, training contract, frozen prediction
protocol, metrics, and replay procedure. Appendices A--K provide complete
proofs, experimental details, development and sealed-family tables, repeat and
duration studies, probe-cost accounting, and artifact hashes. Each headline
result is linked to a frozen evidence file and a fresh-process replay receipt.

\bibliography{references}

\begin{thebibliography}{30}
\providecommand{\natexlab}[1]{#1}
\providecommand{\url}[1]{\texttt{#1}}
\expandafter\ifx\csname urlstyle\endcsname\relax
  \providecommand{\doi}[1]{doi: #1}\else
  \providecommand{\doi}{doi: \begingroup \urlstyle{rm}\Url}\fi

\bibitem[Austin et~al.(2021)Austin, Odena, Nye, Bosma, Michalewski, Dohan,
  Jiang, Cai, Terry, Le, and Sutton]{austin2021mbpp}
Jacob Austin, Augustus Odena, Maxwell Nye, Maarten Bosma, Henryk Michalewski,
  David Dohan, Ellen Jiang, Carrie Cai, Michael Terry, Quoc Le, and Charles
  Sutton.
\newblock Program synthesis with large language models, 2021.
\newblock URL \url{https://arxiv.org/abs/2108.07732}.

\bibitem[Clark et~al.(2019)Clark, Lee, Chang, Kwiatkowski, Collins, and
  Toutanova]{clark2019boolq}
Christopher Clark, Kenton Lee, Ming-Wei Chang, Tom Kwiatkowski, Michael
  Collins, and Kristina Toutanova.
\newblock {BoolQ}: Exploring the surprising difficulty of natural yes/no
  questions.
\newblock In \emph{Proceedings of NAACL-HLT 2019}, pp.\  2924--2936.
  Association for Computational Linguistics, 2019.
\newblock \doi{10.18653/v1/N19-1300}.
\newblock URL \url{https://aclanthology.org/N19-1300/}.

\bibitem[Cobbe et~al.(2021)Cobbe, Kosaraju, Bavarian, Chen, Jun, Kaiser,
  Plappert, Tworek, Hilton, Nakano, et~al.]{cobbe2021gsm8k}
Karl Cobbe, Vineet Kosaraju, Mohammad Bavarian, Mark Chen, Heewoo Jun, Lukasz
  Kaiser, Matthias Plappert, Jerry Tworek, Jacob Hilton, Rei Nakano, et~al.
\newblock Training verifiers to solve math word problems, 2021.
\newblock URL \url{https://arxiv.org/abs/2110.14168}.

\bibitem[Gulrajani \& Lopez-Paz(2021)Gulrajani and
  Lopez-Paz]{gulrajani2021domainbed}
Ishaan Gulrajani and David Lopez-Paz.
\newblock In search of lost domain generalization.
\newblock In \emph{International Conference on Learning Representations}, 2021.
\newblock URL \url{https://openreview.net/forum?id=lQdXeXDoWtI}.

\bibitem[Hewitt et~al.(2021)Hewitt, Ethayarajh, Liang, and
  Manning]{hewitt2021conditional}
John Hewitt, Kawin Ethayarajh, Percy Liang, and Christopher~D. Manning.
\newblock Conditional probing: Measuring usable information beyond a baseline.
\newblock In \emph{Proceedings of the 2021 Conference on Empirical Methods in
  Natural Language Processing}, pp.\  1626--1639. Association for Computational
  Linguistics, 2021.
\newblock \doi{10.18653/v1/2021.emnlp-main.122}.
\newblock URL \url{https://aclanthology.org/2021.emnlp-main.122/}.

\bibitem[Hu et~al.(2022)Hu, Shen, Wallis, Allen-Zhu, Li, Wang, Wang, and
  Chen]{hu2022lora}
Edward~J. Hu, Yelong Shen, Phillip Wallis, Zeyuan Allen-Zhu, Yuanzhi Li, Shean
  Wang, Lu~Wang, and Weizhu Chen.
\newblock {LoRA}: Low-rank adaptation of large language models.
\newblock In \emph{International Conference on Learning Representations}, 2022.
\newblock URL \url{https://openreview.net/forum?id=nZeVKeeFYf9}.

\bibitem[{IBM Granite Team}(2024)]{ibmgranite2024granite31}
{IBM Granite Team}.
\newblock {Granite-3.1-8B-Instruct}.
\newblock Hugging Face model repository, 2024.
\newblock URL \url{https://huggingface.co/ibm-granite/granite-3.1-8b-instruct}.
\newblock Model release; frozen revision 4009206d5fc9.

\bibitem[Ilharco et~al.(2023)Ilharco, Ribeiro, Wortsman, Gururangan, Schmidt,
  Hajishirzi, and Farhadi]{ilharco2023task}
Gabriel Ilharco, Marco~Tulio Ribeiro, Mitchell Wortsman, Suchin Gururangan,
  Ludwig Schmidt, Hannaneh Hajishirzi, and Ali Farhadi.
\newblock Editing models with task arithmetic.
\newblock In \emph{International Conference on Learning Representations}, 2023.
\newblock URL \url{https://openreview.net/forum?id=6t0Kwf8-jrj}.

\bibitem[Ilyas et~al.(2022)Ilyas, Park, Engstrom, Leclerc, and
  Madry]{ilyas2022datamodels}
Andrew Ilyas, Sung~Min Park, Logan Engstrom, Guillaume Leclerc, and Aleksander
  Madry.
\newblock Datamodels: Understanding predictions with data and data with
  predictions.
\newblock In \emph{Proceedings of the 39th International Conference on Machine
  Learning}, volume 162 of \emph{Proceedings of Machine Learning Research},
  pp.\  9525--9587. PMLR, 2022.
\newblock URL \url{https://proceedings.mlr.press/v162/ilyas22a.html}.

\bibitem[Jiang et~al.(2023)Jiang, Sablayrolles, Mensch, Bamford, Chaplot,
  de~las Casas, Bressand, Lengyel, Lample, Saulnier, et~al.]{jiang2023mistral}
Albert~Q. Jiang, Alexandre Sablayrolles, Arthur Mensch, Chris Bamford,
  Devendra~Singh Chaplot, Diego de~las Casas, Florian Bressand, Gianna Lengyel,
  Guillaume Lample, Lucile Saulnier, et~al.
\newblock Mistral 7b, 2023.
\newblock URL \url{https://arxiv.org/abs/2310.06825}.

\bibitem[Koh \& Liang(2017)Koh and Liang]{koh2017influence}
Pang~Wei Koh and Percy Liang.
\newblock Understanding black-box predictions via influence functions.
\newblock In \emph{Proceedings of the 34th International Conference on Machine
  Learning}, volume~70 of \emph{Proceedings of Machine Learning Research}, pp.\
   1885--1894. PMLR, 2017.
\newblock URL \url{https://proceedings.mlr.press/v70/koh17a.html}.

\bibitem[Lin et~al.(2024)Lin, Huang, Ye, Chen, Wang, Li, Ma, Wan, Zou, and
  Liang]{lin2024rectified}
Haowei Lin, Baizhou Huang, Haotian Ye, Qinyu Chen, Zihao Wang, Sujian Li,
  Jianzhu Ma, Xiaojun Wan, James Zou, and Yitao Liang.
\newblock Selecting large language model to fine-tune via rectified scaling
  law.
\newblock In \emph{Proceedings of the 41st International Conference on Machine
  Learning}, volume 235 of \emph{Proceedings of Machine Learning Research},
  pp.\  30080--30107. PMLR, 2024.
\newblock URL \url{https://proceedings.mlr.press/v235/lin24j.html}.

\bibitem[Luo et~al.(2026)Luo, Long, Wang, Duan, Lin, Xu, Luo, Yang, and
  Tang]{luo2026tuneahead}
Yuxiang Luo, Haonan Long, Chen Wang, Qiqi Duan, Xiaotian Lin, Yanwei Xu, Yuyu
  Luo, Weikai Yang, and Nan Tang.
\newblock {TuneAhead}: Predicting fine-tuning performance before full training
  begins, 2026.
\newblock URL \url{https://arxiv.org/abs/2606.17660}.
\newblock Accepted at ICML 2026.

\bibitem[Merity et~al.(2017)Merity, Xiong, Bradbury, and
  Socher]{merity2016wikitext}
Stephen Merity, Caiming Xiong, James Bradbury, and Richard Socher.
\newblock Pointer sentinel mixture models.
\newblock In \emph{International Conference on Learning Representations}, 2017.
\newblock URL \url{https://openreview.net/forum?id=Byj72udxe}.

\bibitem[{Mistral AI}(2024)]{mistralai2024mistralv03}
{Mistral AI}.
\newblock {Mistral-7B-Instruct-v0.3}.
\newblock Hugging Face model repository, 2024.
\newblock URL \url{https://huggingface.co/mistralai/Mistral-7B-Instruct-v0.3}.

\bibitem[Munn \& Wei(2025)Munn and Wei]{munn2025bayesian}
Michael Munn and Susan Wei.
\newblock A {B}ayesian model selection criterion for selecting pretraining
  checkpoints.
\newblock In \emph{Proceedings of the 42nd International Conference on Machine
  Learning}, volume 267 of \emph{Proceedings of Machine Learning Research},
  pp.\  45256--45271. PMLR, 2025.
\newblock URL \url{https://proceedings.mlr.press/v267/munn25a.html}.

\bibitem[Nguyen et~al.(2020)Nguyen, Hassner, Seeger, and
  Archambeau]{nguyen2020leep}
Cuong Nguyen, Tal Hassner, Matthias Seeger, and Cedric Archambeau.
\newblock {LEEP}: A new measure to evaluate transferability of learned
  representations.
\newblock In \emph{Proceedings of the 37th International Conference on Machine
  Learning}, volume 119 of \emph{Proceedings of Machine Learning Research},
  pp.\  7294--7305. PMLR, 2020.
\newblock URL \url{https://proceedings.mlr.press/v119/nguyen20b.html}.

\bibitem[{Qwen Team}(2025)]{qwen2025report}
{Qwen Team}.
\newblock {Qwen2.5} technical report, 2025.
\newblock URL \url{https://arxiv.org/abs/2412.15115}.

\bibitem[Rinaldi et~al.(2026)Rinaldi, Panariello, Salici, Porrello, and
  Calderara]{rinaldi2026theseus}
Filippo Rinaldi, Aniello Panariello, Giacomo Salici, Angelo Porrello, and
  Simone Calderara.
\newblock Transporting task vectors across different architectures without
  training, 2026.
\newblock URL \url{https://arxiv.org/abs/2602.12952}.
\newblock Accepted at ICML 2026.

\bibitem[Ruan et~al.(2024)Ruan, Maddison, and Hashimoto]{ruan2024observational}
Yangjun Ruan, Chris~J. Maddison, and Tatsunori Hashimoto.
\newblock Observational scaling laws and the predictability of language model
  performance.
\newblock In \emph{Advances in Neural Information Processing Systems},
  volume~37, 2024.
\newblock \doi{10.52202/079017-0506}.

\bibitem[Wagenmaker \& Jamieson(2020)Wagenmaker and
  Jamieson]{wagenmaker2020active}
Andrew Wagenmaker and Kevin Jamieson.
\newblock Active learning for identification of linear dynamical systems.
\newblock In \emph{Proceedings of the Thirty Third Conference on Learning
  Theory}, volume 125 of \emph{Proceedings of Machine Learning Research}, pp.\
  3487--3582. PMLR, 2020.
\newblock URL \url{https://proceedings.mlr.press/v125/wagenmaker20a.html}.

\bibitem[Walsh et~al.(2025)Walsh, Soldaini, Groeneveld, Lo, Arora, Bhagia, Gu,
  Huang, Jordan, et~al.]{olmo2025furious}
Evan~Pete Walsh, Luca Soldaini, Dirk Groeneveld, Kyle Lo, Shane Arora, Akshita
  Bhagia, Yuling Gu, Shengyi Huang, Matt Jordan, et~al.
\newblock 2 {OLMo} 2 furious.
\newblock In \emph{Second Conference on Language Modeling}, 2025.
\newblock URL \url{https://openreview.net/forum?id=2ezugTT9kU}.

\bibitem[Wang(2026)]{wang2026unified}
Mian Wang.
\newblock Training, learning and inference: Unified dynamics of neural systems,
  2026.
\newblock URL \url{https://arxiv.org/abs/2608.20965}.
\newblock Concurrent preprint.

\bibitem[Welbl et~al.(2017)Welbl, Liu, and Gardner]{welbl2017sciq}
Johannes Welbl, Nelson~F. Liu, and Matt Gardner.
\newblock Crowdsourcing multiple choice science questions.
\newblock In \emph{Proceedings of the 3rd Workshop on Noisy User-generated
  Text}, pp.\  94--106. Association for Computational Linguistics, 2017.
\newblock \doi{10.18653/v1/W17-4413}.
\newblock URL \url{https://aclanthology.org/W17-4413/}.

\bibitem[Wu et~al.(2026)Wu, Zhao, Wang, Guo, Wang, and He]{wu2026llmdna}
Zhaomin Wu, Haodong Zhao, Ziyang Wang, Jizhou Guo, Qian Wang, and Bingsheng He.
\newblock {LLM DNA}: Tracing model evolution via functional representations.
\newblock In \emph{International Conference on Learning Representations}, 2026.

\bibitem[Xia et~al.(2023)Xia, Artetxe, Zhou, Lin, Pasunuru, Chen, Zettlemoyer,
  and Stoyanov]{xia2023trajectories}
Mengzhou Xia, Mikel Artetxe, Chunting Zhou, Xi~Victoria Lin, Ramakanth
  Pasunuru, Danqi Chen, Luke Zettlemoyer, and Veselin Stoyanov.
\newblock Training trajectories of language models across scales.
\newblock In \emph{Proceedings of the 61st Annual Meeting of the Association
  for Computational Linguistics}, pp.\  13711--13738. Association for
  Computational Linguistics, 2023.
\newblock \doi{10.18653/v1/2023.acl-long.767}.
\newblock URL \url{https://aclanthology.org/2023.acl-long.767/}.

\bibitem[Xia et~al.(2024)Xia, Malladi, Gururangan, Arora, and
  Chen]{xia2024less}
Mengzhou Xia, Sadhika Malladi, Suchin Gururangan, Sanjeev Arora, and Danqi
  Chen.
\newblock {LESS}: Selecting influential data for targeted instruction tuning.
\newblock In \emph{Proceedings of the 41st International Conference on Machine
  Learning}, volume 235 of \emph{Proceedings of Machine Learning Research},
  pp.\  54104--54132. PMLR, 2024.
\newblock URL \url{https://proceedings.mlr.press/v235/xia24c.html}.

\bibitem[{Zhipu AI}(2025)]{zhipuai2025glm49b0414}
{Zhipu AI}.
\newblock {GLM-4-9B-0414}.
\newblock Hugging Face model repository, 2025.
\newblock URL \url{https://huggingface.co/zai-org/GLM-4-9B-0414}.
\newblock Model release; revision 645b8482494e31b6b752272bf7f7f273ef0f3caf.

\bibitem[Zhu et~al.(2022)Zhu, Shahtalebi, and Rudzicz]{zhu2022probing}
Zining Zhu, Soroosh Shahtalebi, and Frank Rudzicz.
\newblock Predicting fine-tuning performance with probing.
\newblock In \emph{Proceedings of the 2022 Conference on Empirical Methods in
  Natural Language Processing}, pp.\  11534--11547. Association for
  Computational Linguistics, 2022.
\newblock \doi{10.18653/v1/2022.emnlp-main.793}.
\newblock URL \url{https://aclanthology.org/2022.emnlp-main.793/}.

\bibitem[Zhuang et~al.(2025)Zhuang, Wu, Wen, Li, Jiao, and
  Ramchandran]{zhuang2025embedllm}
Richard Zhuang, Tianhao Wu, Zhaojin Wen, Andrew Li, Jiantao Jiao, and Kannan
  Ramchandran.
\newblock {EmbedLLM}: Learning compact representations of large language
  models.
\newblock In \emph{International Conference on Learning Representations}, 2025.

\end{thebibliography}
\bibliographystyle{iclr2027_conference}

\appendix

\section{Sealed-Family Summary Tables}
\label{app:sealed_summary}

\begin{table}[H]
\caption{\textbf{First sealed-family evaluation.} Predictions and action
choices are frozen before GLM target-action labels are opened. The direct
readout uses the L-State pulse block as features; the operator readout is the
frozen structured response/direction predictor, and the full L-State readout is
the frozen action predictor.
Sign BA is coordinate-macro balanced accuracy over source-dead-zone-determinate
components. The direct and matched-state methods are endpoint-matched response
and action comparators.}
\label{tab:prospective}
\centering
\scriptsize
\setlength{\tabcolsep}{3pt}
\begin{minipage}[t]{0.47\textwidth}
\centering
\textbf{Signed continuous response and direction}\\[3pt]
\begin{tabular}{lrrr}
\toprule
Method & RMSE $\downarrow$ & MSE gain $\uparrow$ & Sign BA $\uparrow$ \\
\midrule
Capability & 1.339 & 0.000 & 0.366 \\
Direct pulse & 0.712 & 0.718 & 0.571 \\
\textbf{Operator}$^\dagger$ & \textbf{0.624} & \textbf{0.783} & \textbf{0.754} \\
\bottomrule
\end{tabular}
\end{minipage}
\hfill
\begin{minipage}[t]{0.47\textwidth}
\centering
\textbf{Action selection}\\[3pt]
\begin{tabular}{lrrr}
\toprule
Method & Regret $\downarrow$ & Reduction $\uparrow$ & Top-1 $\uparrow$ \\
\midrule
Capability & 0.592 & 0.000 & 0.340 \\
\textbf{Matched state} & \textbf{0.410} & \textbf{0.307} & \textbf{0.432} \\
Full L-State$^\dagger$ & 0.434 & 0.267 & 0.395 \\
\bottomrule
\end{tabular}
\end{minipage}
\vspace{2pt}

\footnotesize $^\dagger$Frozen endpoint readout. MSE gain and regret reduction
use capability as the reference.
\end{table}

\begin{table}[H]
\caption{\textbf{Second sealed-family evaluation on Granite-3.1-8B.} The
action-wise selector is fitted only from the three development families and
frozen before Granite target branches; the operator readout is registered for
direction and the full \lstate{} readout for action ranking.
All rows are frozen before target-action labels are opened. Full five-method
metrics appear in Appendix \ref{app:granite_details}.}
\label{tab:granite}
\centering
\scriptsize
\setlength{\tabcolsep}{3pt}
\begin{minipage}[t]{0.31\textwidth}
\centering
\textbf{Continuous response}\\[3pt]
\resizebox{\linewidth}{!}{%
\begin{tabular}{lrr}
\toprule
Method & RMSE $\downarrow$ & MSE gain $\uparrow$ \\
\midrule
Capability & 1.172 & 0.0\% \\
Direct pulse & \textbf{0.544} & \textbf{78.4\%} \\
Operator & 0.634 & 70.7\% \\
\sourcegate{} & 0.554 & 77.7\% \\
\bottomrule
\end{tabular}}
\end{minipage}
\hfill
\begin{minipage}[t]{0.30\textwidth}
\centering
\textbf{Direction}\\[3pt]
\resizebox{\linewidth}{!}{%
\begin{tabular}{lr}
\toprule
Method & Sign BA $\uparrow$ \\
\midrule
Capability & 0.654 \\
Direct pulse & 0.535 \\
Operator$^\dagger$ & 0.657 \\
Full L-State & \textbf{0.664} \\
\bottomrule
\end{tabular}}
\end{minipage}
\hfill
\begin{minipage}[t]{0.34\textwidth}
\centering
\textbf{Action selection}\\[3pt]
\resizebox{\linewidth}{!}{%
\begin{tabular}{lrr}
\toprule
Method & Regret $\downarrow$ & Top-1 $\uparrow$ \\
\midrule
Capability & 0.291 & 0.531 \\
Direct pulse & 0.215 & 0.623 \\
Matched state & \textbf{0.212} & \textbf{0.648} \\
Full L-State$^\dagger$ & 0.217 & 0.636 \\
\bottomrule
\end{tabular}}
\end{minipage}
\vspace{2pt}

\footnotesize $^\dagger$Registered endpoint readout. MSE gain uses capability as the
reference. Selector RMSE pools all 540 scalar response components. Sign BA
pools actions within each capability coordinate, computes balanced accuracy
coordinate-wise, and macro-averages across coordinates. The matched-state reference uses aligned
source-state responses and therefore a stronger information set than the
deployment readouts.
\end{table}

\section{Core Proofs}
\label{app:proofs}

\subsection{Capability Level Sets}

\begin{proof}[Proof of \cref{thm:capability}]
Let $r=r_m^h(x,u)$, $r'=r_{m'}^h(x',u)$, and $y=f(c,u)$. The triangle
inequality gives
$\|r-r'\|_2\leq\|r-y\|_2+\|y-r'\|_2$, so at least one of the two errors is
at least $\|r-r'\|_2/2$.

For the characterization, if $r_m^h(x,u)=f(c_m(x),u)$, equal capabilities
imply equal responses. Conversely, if responses are constant on every
capability level set, define $f(c,u)$ as the common response of any checkpoint
in that level set. Constancy makes this definition independent of the chosen
representative.
\end{proof}

\subsection{Factorization from Smooth Dynamics}

\begin{proof}[Proof of \cref{thm:smooth-factor}]
Suppress $(m,x,u,h)$ and write $\Delta=\Delta_{m,u}^h(x,\xi)$. The integral
Taylor formula gives
\begin{equation}
 c_m(x+\Delta)-c_m(x)=Jc_m(x)\Delta+\rho(\Delta),
\end{equation}
where
\begin{align*}
 \|\rho(\Delta)\|_2
 &\leq\int_0^1
 \|Jc_m(x+t\Delta)-Jc_m(x)\|_2\,\|\Delta\|_2\,dt\\
 &\leq\tfrac12L_c\|\Delta\|_2^2.
\end{align*}
Taking expectations, dividing by $h$, and inserting
$\mathbb E\Delta/h=G_m(x)a_{m,u}+e_g$ yields
\[
 r_m^h(x,u)=Jc_m(x)G_m(x)a_{m,u}
 +Jc_m(x)e_g+\mathbb E\rho(\Delta)/h.
\]
The assumed bounds on $e_g$ and the second moment of $\Delta$ give
Eq. \eqref{eq:local-remainder}.
\end{proof}

For deterministic gradient flow with primitive losses
$L_{a_m}=\sum_{j=1}^r a_{m,j}L_j$, the update map at $x$ has columns
$-\nabla L_j(x)$. The theorem then recovers
$B_{:,j}=-Jc(x)\nabla L_j(x)$, with an $O(h\|a_m\|_2^2)$ remainder whenever
the induced capability velocity is locally Lipschitz.

\subsection{Pulse Identifiability and Design}

\begin{proof}[Proof of \cref{thm:pulse-ident}]
If $a_\star=Q\lambda$, then $Ba_\star=(BQ)\lambda$, so the pulse observations
determine the target response. If $a_\star\notin\operatorname{col}(Q)$, choose
$z\in\operatorname{null}(Q^\top)$ with $z^\top a_\star\neq0$ and a nonzero
$w\in\R^d$. The matrix $H=wz^\top$ obeys $HQ=0$ but
$Ha_\star\neq0$; hence $B$ and $B+H$ have identical pulse responses and
different target responses. Applying this statement to every $a_\star\in\R^r$
gives the full-row-rank characterization.

When $Q$ has full row rank, $QQ^\dagger=I_r$, and therefore
\[
 \widehat B-B=(BQ+E)Q^\dagger-B=EQ^\dagger.
\]
Submultiplicativity and $\|Q^\dagger\|_2=1/\sigma_r(Q)$ give both norm bounds in
Eq. \eqref{eq:ident}.
\end{proof}

\begin{proof}[Proof of \cref{thm:pulse-opt}]
The upper bound
$\|EQ^\dagger\|_2\leq\|E\|_2\|Q^\dagger\|_2$ is attained by choosing a
rank-one $E$ aligned with a leading left singular vector of $Q^\dagger$.
Thus the exact amplification is $\|Q^\dagger\|_2=1/\sigma_r(Q)$.

Let $\sigma_1\geq\cdots\geq\sigma_r>0$ be the singular values of $Q$.
Then
\[
 \sigma_r(Q)^2\leq\frac1r\sum_{j=1}^r\sigma_j(Q)^2
 =\frac{\|Q\|_F^2}{r}\leq\frac{E_0}{r}.
\]
Equality holds exactly when the energy budget is tight and all $r$ singular
values are equal, equivalently $QQ^\top=(E_0/r)I_r$. Taking reciprocals proves
the minimax statement.
\end{proof}

If each of $k$ pulses is independently repeated $n$ times, every coordinate of
the repeat-mean noise is mean-zero sub-Gaussian with variance proxy
$\sigma^2/n$, and
each pulse residual has norm at most $\epsilon_p$, then with probability at
least $1-\delta$,
\begin{equation}
 \|E\|_2\leq\sqrt{k}\epsilon_p+
 \sigma\sqrt{\frac{2dk\log(2dk/\delta)}{n}}.
 \label{eq:finite-repeat}
\end{equation}
This follows from a coordinatewise tail bound, a union bound over $dk$
entries, and $\|Z\|_2\leq\|Z\|_F$. Combining
Eqs. \eqref{eq:ident} and \eqref{eq:finite-repeat} supplies a finite-repeat
choice of the target-pulse term $\eta_\star$ in \cref{thm:end-to-end}; the
source target-response, target-action residual, and realized-response terms
retain their displayed radii.

\subsection{Source Coverage and Action Coordinates}

\begin{proof}[Proof of \cref{thm:source-coordinate}]
With exact operators,
$\widehat a_u-\bar a_u=\mathcal B_S^\dagger(v^{\rm het}_{S,u}+e_S)$, whose norm is at
most $(H_{S,u}+\epsilon_S)/\underline\sigma_S$.

For estimated operators, Weyl's inequality gives
$\sigma_r(\widehat{\mathcal B}_S)\geq\underline\sigma_S-\tau_S>0$. Rewriting the source
response as
\[
 y_S(u)=\widehat{\mathcal B}_S\bar a_u
 +(\mathcal B_S-\widehat{\mathcal B}_S)\bar a_u+v^{\rm het}_{S,u}+e_S
\]
and left-multiplying by $\widehat{\mathcal B}_S^\dagger$ yields
\[
 \widehat a_u-\bar a_u=\widehat{\mathcal B}_S^\dagger
 \big[(\mathcal B_S-\widehat{\mathcal B}_S)\bar a_u+v^{\rm het}_{S,u}+e_S\big].
\]
The pseudoinverse norm is at most $1/(\underline\sigma_S-\tau_S)$, proving
Eq. \eqref{eq:coordinate-bound}.
\end{proof}

The source-diversity statement follows from
$G_{S\cup\{\mathrm{new}\}}=G_S+B_{\mathrm{new}}^\top B_{\mathrm{new}}$ and
Weyl monotonicity for positive semidefinite matrices. In particular,
\[
 \lambda_{\min}(G_{S\cup\{\mathrm{new}\}})
 \geq\lambda_{\min}(G_S)+\sigma_r(B_{\mathrm{new}})^2.
\]
If source block $s$ has operator error at most $\beta_s$, the stacked
perturbation satisfies
$\tau_S\leq(\sum_s\beta_s^2)^{1/2}$. Applying
Eq. \eqref{eq:finite-repeat} to each source block and taking a joint event
therefore supplies the finite-repeat source term in Eq. \eqref{eq:e2e-expanded}.

Under the same source conditions, including
$\tau_S<\underline\sigma_S$, the ridge estimator in
Eq. \eqref{eq:operator} also admits an explicit source bound. For
$\lambda\geq0$, let
\[
 \widehat a_{u,\lambda}
 =(\widehat{\mathcal B}_S^\top\widehat{\mathcal B}_S+\lambda I)^{-1}
 \widehat{\mathcal B}_S^\top y_S(u).
\]
Let $\widehat\gamma_i$ denote the singular values of
$\widehat{\mathcal B}_S$, and define
\[
 b_\lambda=\frac{\lambda}{\widehat\gamma_r^2+\lambda},
 \qquad
 \rho_\lambda=\max_i
 \frac{\widehat\gamma_i}{\widehat\gamma_i^2+\lambda}.
\]
For $\lambda>0$, or for $\lambda=0$ with full column rank,
\begin{equation}
 \|\widehat a_{u,\lambda}-\bar a_u\|_2
 \leq b_\lambda A+\rho_\lambda(\tau_SA+H_{S,u}+\epsilon_S).
 \label{eq:ridge-coordinate}
\end{equation}
Indeed, the ridge normal equations express the error as a regularization-bias
term plus the ridge inverse applied to
$(\mathcal B_S-\widehat{\mathcal B}_S)\bar a_u+v^{\rm het}_{S,u}+e_S$.
Singular-value decomposition
gives the two displayed coefficients; in particular,
$\rho_\lambda\leq\min\{1/(\underline\sigma_S-\tau_S),1/(2\sqrt\lambda)\}$ for
$\lambda>0$.

\subsection{Unseen-Family Transfer and Decisions}

\begin{proof}[Proof of \cref{thm:end-to-end}]
By \cref{thm:pulse-ident},
$\|\widehat B_\star-B_\star\|_2\leq
\eta_\star/\sigma_r(Q_\star)=:\varepsilon_{B,\star}$. Let
$\zeta_u=(\tau_SA+H_{S,u}+\epsilon_S)/(\underline\sigma_S-\tau_S)$ from
\cref{thm:source-coordinate}. Writing
$r_\star^h(u)=B_\star(\bar a_u+\delta_{\star,u})+d_{\star,u}$ gives
\begin{align*}
 \widehat B_\star\widehat a_u-r_\star^h(u)
 ={}&(\widehat B_\star-B_\star)\bar a_u
 \\
 &+\widehat B_\star(\widehat a_u-\bar a_u)
 -B_\star\delta_{\star,u}-d_{\star,u}.
\end{align*}
Since $\|\widehat B_\star\|_2\leq\|B_\star\|_2+\varepsilon_{B,\star}$, taking norms and
substituting $\|\bar a_u\|_2\leq A$,
$\|\delta_{\star,u}\|_2\leq D_{\star,u}$, $\zeta_u$, and
$\varepsilon_{B,\star}$ gives
Eq. \eqref{eq:e2e-expanded}.
\end{proof}

If $\epsilon_{\star,u}$ is instantiated by the smooth local remainder in
Eq. \eqref{eq:local-remainder}, its curvature term uses
$\|a_{\star,u}\|_2\leq A+D_{\star,u}$ and can therefore scale as
$(A+D_{\star,u})^2$; $H_{S,u}$ accounts only for the linear pooling mismatch.

For completeness, let $\mathcal C_j\subseteq\{+,-\}$ contain the nonempty
source-dead-zone-determinate classes of coordinate $j$, and let $M_{j,s}$ of
$N_{j,s}$ cells in class $s$ fail the sign margin. Over the valid coordinate
set $\mathcal J$,
\begin{equation}
 \operatorname{BA}_{\rm macro}\geq
 1-\frac{1}{|\mathcal J|}\sum_{j\in\mathcal J}
 \frac{1}{|\mathcal C_j|}\sum_{s\in\mathcal C_j}\frac{M_{j,s}}{N_{j,s}}.
 \label{eq:ba-margin}
\end{equation}
When $\rho_J=J(u^\star)-\min_uJ(u)>0$, Eq. \eqref{eq:action-regret} divided by
$\rho_J$ bounds normalized regret. For realized responses, define
$\widetilde J(u)=h v^\top\widetilde r_\star(u)-\operatorname{cost}(u)$,
$\widetilde u^\star\in\arg\max_u\widetilde J(u)$, and
$\widetilde\rho_J=\max_u\widetilde J(u)-\min_u\widetilde J(u)$. Then
\begin{equation}
 \widetilde J(\widetilde u^\star)-\widetilde J(\widehat u)
 \leq h\|v\|_2\big(
 \overline\Delta_{\star,\widetilde u^\star}
 +\overline\Delta_{\star,\widehat u}\big),
 \label{eq:realized-regret}
\end{equation}
and division by positive $\widetilde\rho_J$ gives realized normalized regret.
The realized sign and exact-action conditions replace $\Delta$ by
$\overline\Delta$.

\begin{proof}[Proof of Corollary~\ref{cor:decisions}]
Equation \eqref{eq:e2e} implies
$|\widehat r_{\star,j}(u)-r_{\star,j}^h(u)|\leq\Delta_{\star,u}$ for every
coordinate. A response farther than this radius from zero keeps its sign,
which proves the sign statement.
Every class cell that clears the margin is therefore correct. For coordinate
$j$ and class $s$, recall is at least $1-M_{j,s}/N_{j,s}$. Averaging first over
the nonempty classes of each coordinate and then over valid coordinates proves
Eq. \eqref{eq:ba-margin}.

For action selection, add and subtract predicted utilities and use optimality
of $\widehat u$:
\begin{align*}
 J(u^\star)-J(\widehat u)
 &\leq |J(u^\star)-\widehat J(u^\star)|
 +|\widehat J(\widehat u)-J(\widehat u)|\\
 &\leq h\|v\|_2
 (\Delta_{\star,u^\star}+\Delta_{\star,\widehat u}).
\end{align*}
Division by the positive utility range $\rho_J$ gives the normalized-regret
bound.
For every $u\neq u^\star$, the predicted best-versus-$u$ gap is at least the
true gap minus
$h\|v\|_2(\Delta_{\star,u^\star}+\Delta_{\star,u})$. The stated utility-margin
condition makes all these differences positive, so the predicted maximizer is
$u^\star$.
Applying the same add-and-subtract argument to $\widetilde J$ and the
simultaneous radii $\overline\Delta_{\star,u}$ proves
Eq. \eqref{eq:realized-regret}; division by positive
$\widetilde\rho_J$ gives its normalized form.
For high-probability radii, take a joint event covering every source block,
target pulse, candidate action, and scored coordinate before applying these
deterministic arguments. A union bound over the finite index set constructs
such an event from marginal tail bounds.
\end{proof}

\section{Complete Experimental Protocol}
\label{app:protocol}

The direct pulse and full L-State readouts use the same ridge objective with
$z(s)=p(s)$ and $z(s)=[c(s);p(s)]$, respectively:
\begin{equation}
 \widehat W_{u,\alpha}=\arg\min_W
 \sum_{s\in\mathcal S_{\mathrm{src}}}\|W^\top z(s)-y(s,u)\|_2^2
 +\alpha\|W\|_F^2,
 \label{eq:ridge}
\end{equation}
The structured operator objective used in Section \ref{sec:endpoint_policy} is
\begin{equation}
 \widehat a_u=\arg\min_a\sum_{s\in\mathcal S_{\mathrm{src}}}
 \|R(s)a-y(s,u)\|_2^2+\lambda\|a\|_2^2,
 \qquad \widehat y_{\rm op}(s,u)=R(s)\widehat a_u.
 \label{eq:operator}
\end{equation}

\subsection{State Generation}

Each trajectory starts from the same base model with a newly initialized
zero-output LoRA adapter. Four synthetic generators---sequence reversal,
lexicon classification, date normalization, and template paraphrase---appear
twice in a balanced eight-episode order. States $t00$ through $t08$ are saved,
including adapter and parent hashes. Three fixed trajectory seeds give nine
states per trajectory and 27 per family.

\subsection{Primary Training Episode}

The base model is frozen. LoRA rank is 4 with scaling 8, dropout 0, and query
and value projection targets. Training uses BF16, eager attention, maximum
length 512, micro-batch 1, gradient accumulation 4, AdamW learning rate
$10^{-4}$, betas $(0.9,0.999)$, epsilon $10^{-8}$, no weight decay, and global
gradient clipping at 1.0. Eight optimizer steps consume 32 examples. Every
branch creates new optimizer moments and restores the source adapter afterward;
the restored tensor hash must match exactly.

\subsection{Response Evaluation}

Each capability is the mean negative completion-token NLL after per-token
clipping to $[0,20]$, so larger is better. Before and after evaluations use the
same 32 held-out examples and tokenizer outputs. The primary response is the
paired per-example difference. The main state and readouts use this paired
quantity; generated-answer diagnostics are stored as a separate trace.

\subsection{Leakage Controls}

The data registry binds source IDs, normalized text hashes, processed-file
hashes, and roles. Exact cross-role text intersections are zero; source indices
for public train and evaluation splits are disjoint; synthetic pulse-to-target
13-gram Jaccard is below 0.01. The analysis reads validated completed
manifests, and the label vault opens target responses after the frozen outer
prediction artifact is serialized.

\section{Development Results}
\label{app:development_results}

\begin{table}[H]
\caption{Complete three-family leave-one-family-out development comparison.
The direct pulse and operator methods read the same L-State pulse block.
RMSE is source-standardized; regret is realized normalized regret under six
fixed utilities. The directional blend is evaluated in the development suite.
Target early uses target-specific responses after two update steps.}
\label{tab:development_all}
\centering
\small
\begin{tabular}{lrrrrr}
\toprule
Method & RMSE $\downarrow$ & Cosine $\uparrow$ & Sign BA $\uparrow$ & Regret $\downarrow$ & Top-1 $\uparrow$ \\
\midrule
Source mean & 7.895 & 0.624 & 0.479 & 0.268 & \textbf{0.574} \\
Capability & 9.987 & 0.504 & 0.602 & 0.472 & 0.309 \\
Direct pulse & \textbf{7.772} & 0.620 & 0.531 & 0.299 & 0.500 \\
Scalar statistics & 8.738 & 0.591 & 0.547 & 0.365 & 0.449 \\
Matched state & 7.913 & 0.607 & 0.491 & \textbf{0.232} & 0.566 \\
Operator & 7.777 & 0.672 & \textbf{0.675} & 0.303 & 0.350 \\
Full L-State & 8.293 & \textbf{0.734} & 0.589 & 0.248 & 0.541 \\
Target early (2 steps) & 8.067 & 0.488 & 0.671 & 0.283 & 0.416 \\
Directional blend & 7.876 & 0.628 & 0.643 & 0.292 & 0.422 \\
\bottomrule
\end{tabular}
\end{table}

\begin{table}[H]
\caption{Full L-State readout results by held-out family.}
\label{tab:folds}
\centering
\small
\begin{tabular}{lrrrr}
\toprule
Family & Full L-State & Capability & MSE gain & Sign BA \\
\midrule
Qwen & 0.926 & 2.947 & 0.901 & 0.787 \\
Mistral & 13.866 & 15.117 & 0.159 & 0.610 \\
OLMo & 3.631 & 7.876 & 0.787 & 0.500 \\
\bottomrule
\end{tabular}
\end{table}

\begin{table}[H]
\caption{Family heterogeneity of sign and realized regret.}
\label{tab:family_endpoints}
\centering
\scriptsize
\setlength{\tabcolsep}{3pt}
\begin{tabular}{lrrrrr}
\toprule
Family & Capability BA & Operator BA & Capability R & L-State R & Matched R \\
\midrule
Qwen & 0.829 & 0.743 & 0.497 & 0.375 & 0.352 \\
Mistral & 0.617 & 0.752 & 0.276 & 0.143 & 0.235 \\
OLMo & 0.561 & 0.521 & 0.643 & 0.225 & 0.110 \\
\bottomrule
\end{tabular}
\end{table}

\begin{table}[H]
\caption{Pulse-count ablation for the learned readout.}
\label{tab:pulses}
\centering
\small
\begin{tabular}{rrrrrr}
\toprule
$k$ & RMSE & MSE gain & Cosine & Sign BA & Regret \\
\midrule
1 & 9.108 & 0.168 & 0.670 & 0.604 & 0.300 \\
2 & 9.614 & 0.073 & 0.650 & 0.615 & 0.259 \\
3 & 8.358 & 0.300 & 0.724 & 0.598 & 0.262 \\
4 & 8.293 & 0.311 & 0.734 & 0.589 & 0.248 \\
\bottomrule
\end{tabular}
\end{table}

\subsection{Aligned-Pulse Permutation}

For every outer fold and permutation, we shuffle whole pulse-response matrices
across states within each source family, rebuild the complete 25-dimensional
input, repeat source-trajectory hyperparameter selection, refit, and score the
unchanged target family. The one-sided family results are:
\begin{center}
\small
\begin{tabular}{lrrr}
\toprule
Family & Aligned gain & Null 95th & $p$ \\
\midrule
Qwen & 0.901 & 0.873 & 0.0010 \\
Mistral & 0.159 & 0.135 & 0.0330 \\
OLMo & 0.787 & 0.870 & 0.6014 \\
\bottomrule
\end{tabular}
\end{center}
We report raw per-fold one-sided $p$-values; the preregistered comparison uses
each fold's null 95th percentile.

\subsection{Source-Fit Blend Weights}

Weights below multiply the learned readout; one minus the weight multiplies the
operator readout. Reciprocal source-family fits select the weights while
target-family pulse reliability fixes the sign dead zones and target-action
labels remain sealed.
\begin{center}
\small
\begin{tabular}{lrrrrr}
\toprule
Held out & Math & Code & Science & Reading & General \\
\midrule
Qwen & 0.00 & 0.80 & 0.05 & 0.00 & 0.50 \\
Mistral & 0.05 & 0.00 & 0.30 & 0.75 & 0.05 \\
OLMo & 0.30 & 0.85 & 0.00 & 0.00 & 0.10 \\
\bottomrule
\end{tabular}
\end{center}
This development-suite interpolation raises sign balanced accuracy from 0.589
to 0.643 and reduces RMSE from 8.293 to 7.876. The first GLM seal uses the
registered operator/full-state endpoint assignment summarized in the main paper.

\section{Prospective GLM Details}
\label{app:prospective_details}

\subsection{Freeze Order and Complete Comparison}

The prospective contract first binds the rule learned from the 81 development
states. It then builds 27 GLM states and their four target-independent pulse
responses. While the target branch count remains zero, the pipeline serializes
540 response predictions and 810 utility-conditioned action decisions. Scoring
begins after the label-opening receipt is published and all 108 GLM target
branches finish. Table \ref{tab:prospective_all} gives all five frozen methods.

\begin{table}[H]
\caption{Complete prospective GLM comparison and response slices. Direction
uses the 184 scalar responses selected by source-frozen dead zones; continuous
RMSE retains all 540 components. The direct pulse and operator methods read the
same L-State pulse block.}
\label{tab:prospective_all}
\centering
\small
\begin{tabular}{lrrrrrr}
\toprule
Method & RMSE $\downarrow$ & MSE gain $\uparrow$ & Cosine $\uparrow$ & Sign BA $\uparrow$ & Regret $\downarrow$ & Top-1 $\uparrow$ \\
\midrule
Capability & 1.339 & 0.000 & 0.431 & 0.366 & 0.592 & 0.340 \\
Direct pulse & 0.712 & 0.718 & \textbf{0.552} & 0.571 & 0.451 & 0.389 \\
Matched state & 0.939 & 0.508 & 0.535 & 0.569 & \textbf{0.410} & \textbf{0.432} \\
Operator & \textbf{0.624} & \textbf{0.783} & 0.421 & \textbf{0.754} & 0.513 & 0.352 \\
Full L-State & 0.841 & 0.606 & 0.471 & 0.359 & 0.434 & 0.395 \\
\bottomrule
\end{tabular}
\vspace{8pt}

\begin{minipage}[t]{0.48\textwidth}
\centering
\textbf{Response and direction by target action}\\[2pt]
\resizebox{\linewidth}{!}{%
\begin{tabular}{lrrrrrrr}
\toprule
Action & Cap. RMSE & Direct RMSE & Oper. RMSE & Cap. BA & Direct BA & Oper. BA & Det. \\
\midrule
$u_{\rm math}$ & 1.533 & 0.541 & 0.085 & 0.833 & 0.833 & 0.833 & 45 \\
$u_{\rm code}$ & 1.214 & 0.571 & 0.454 & 0.479 & 0.500 & 0.850 & 48 \\
$u_{\rm science}$ & 0.966 & 0.613 & 0.825 & 1.000 & 1.000 & 0.000 & 28 \\
$u_{\rm reading}$ & 1.554 & 1.015 & 0.815 & 0.167 & 0.438 & 0.641 & 63 \\
\bottomrule
\end{tabular}%
}
\end{minipage}
\hfill
\begin{minipage}[t]{0.48\textwidth}
\centering
\textbf{Response and direction by capability coordinate}\\[2pt]
\resizebox{\linewidth}{!}{%
\begin{tabular}{lrrrrrrr}
\toprule
Coordinate & Cap. RMSE & Direct RMSE & Oper. RMSE & Cap. BA & Direct BA & Oper. BA & Det. \\
\midrule
Math & 1.366 & 0.720 & 0.216 & 0.071 & 0.071 & 0.721 & 19 \\
Code & 1.553 & 0.546 & 0.499 & 1.000 & 1.000 & 1.000 & 52 \\
Science & 1.244 & 0.984 & 1.024 & 0.392 & 0.399 & 0.516 & 86 \\
Reading & 1.686 & 0.852 & 0.772 & 0.000 & 0.815 & 0.778 & 27 \\
General & 0.545 & 0.146 & 0.073 & -- & -- & -- & 0 \\
\bottomrule
\end{tabular}%
}
\end{minipage}
\end{table}

\subsection{Response Slices}

The determinate direction set contains 169 positive and 15 negative responses.
The general coordinate falls inside its source-frozen dead zone for all 108
records, so its direction entry is shown as a dash.

\subsection{Action and Trajectory Slices}

\begin{table}[H]
\caption{Prospective action selection by utility. Cells report regret/top-1.}
\label{tab:prospective_utility}
\centering
\scriptsize
\begin{tabular}{lrrr}
\toprule
Utility & Capability & Full L-State & Matched state \\
\midrule
Math & 0.987/0.000 & \textbf{0.398/0.074} & 0.539/0.037 \\
Code & 0.000/1.000 & 0.000/1.000 & 0.000/1.000 \\
Science & 0.000/1.000 & 0.000/1.000 & 0.000/1.000 \\
Reading & 0.988/0.000 & 0.985/0.000 & \textbf{0.797/0.185} \\
General & 0.749/0.037 & 0.474/0.296 & \textbf{0.375/0.370} \\
Balanced & 0.827/0.000 & \textbf{0.745/0.000} & 0.749/0.000 \\
\bottomrule
\end{tabular}
\end{table}

\begin{table}[H]
\caption{Prospective endpoint estimates by GLM trajectory.}
\label{tab:prospective_trajectories}
\centering
\scriptsize
\resizebox{\columnwidth}{!}{%
\begin{tabular}{lrrrrr}
\toprule
Seed & Operator MSE gain & Operator BA & BA gain & L-State regret & Regret reduction \\
\midrule
20260901 & 0.781 & 0.706 & 0.322 & 0.434 & 0.254 \\
20260902 & 0.785 & 0.735 & 0.372 & 0.409 & 0.299 \\
20260903 & 0.783 & 0.860 & 0.523 & 0.457 & 0.250 \\
\bottomrule
\end{tabular}
}
\end{table}

\begin{table}[H]
\caption{Whole-trajectory bootstrap summaries with 10,000 valid resamples.}
\label{tab:prospective_bootstrap}
\centering
\scriptsize
\begin{tabular}{lrrr}
\toprule
Quantity & Estimate & 2.5\% & 97.5\% \\
\midrule
Operator MSE gain & 0.783 & 0.781 & 0.785 \\
Operator sign BA & 0.754 & 0.705 & 0.860 \\
Operator sign BA gain & 0.388 & 0.322 & 0.523 \\
L-State regret reduction & 0.267 & 0.250 & 0.299 \\
\bottomrule
\end{tabular}
\end{table}

Whole-trajectory resampling gives percentile ranges of
$[78.1\%,78.5\%]$ for operator MSE reduction, $[0.705,0.860]$ for its sign BA,
$[32.2,52.3]$ points for its sign gain, and $[25.0\%,29.9\%]$ for full-state
regret reduction. These intervals summarize variation across the three
prospective GLM trajectories.

\section{Mechanism Diagnosis and Action-Wise Response Selection}
\label{app:revision_diagnostics}

\subsection{Five-Repeat Science Intervention}

The registered repeat phase adds four independent science-action branches to
the original branch for every GLM state. It completes 108 new branches, restores
all state adapters, and leaves the canonical target tree unchanged. Table
\ref{tab:science_repeats} scores the byte-frozen capability, direct, and operator predictions
against the original response and the five-response mean and median.

\begin{table}[H]
\caption{GLM science-action metrics under repeat-based truth summaries. Each
cell is source-standardized RMSE/sign BA.}
\label{tab:science_repeats}
\centering
\small
\begin{tabular}{lrrr}
\toprule
Truth summary & Capability & Direct pulse & Operator \\
\midrule
Original & 0.9664/1.000 & 0.6132/1.000 & 0.8246/0.000 \\
Five-repeat mean & 0.9667/1.000 & 0.6115/1.000 & 0.8245/0.000 \\
Five-repeat median & 0.9643/1.000 & 0.6111/1.000 & 0.8243/0.000 \\
\bottomrule
\end{tabular}
\end{table}

Across the 27 states, the science-coordinate response has mean within-state
standard deviation 0.00446 and unanimous five-repeat sign in 27/27 states. The
source science-action coordinate uses $\alpha=1000$ and has norm 0.0539; its
held-out-family fits have norms 0.0588--1.108 and pairwise cosine
$-0.555$--0.322.

\subsection{Development-Fitted Readout Selection}

The direct readout is selected after a strict held-out-family win in at least
two of the three development folds; the remaining actions retain the operator
readout. Table \ref{tab:source_gate} records the frozen mapping. The direction
selector is retained as a diagnostic; the registered Granite direction endpoint
uses the operator readout.

\begin{table}[H]
\caption{\sourcegate. ``Wins'' counts strict direct-readout wins over the
operator readout among three held-out development families.}
\label{tab:source_gate}
\centering
\small
\begin{tabular}{lrrrr}
\toprule
Action & Response & Wins & Direction & Wins \\
\midrule
Math & Operator & 0/3 & Operator & 0/3 \\
Code & Direct & 3/3 & Operator & 1/3 \\
Science & Direct & 2/3 & Direct & 3/3 \\
Reading & Operator & 1/3 & Direct & 2/3 \\
\bottomrule
\end{tabular}
\end{table}

On Granite, the \sourcegate\ reaches pooled and action-macro RMSE of 0.5537
and 0.5479. Its pooled and action-macro sign BA are 0.6496 and 0.6470. The
source-only direction diagnostic has pooled BA 0.5689 and action-macro BA
0.6558, illustrating the
importance of stating the aggregation rule explicitly.

\section{Five-Family Semantic-Coordinate Audit}
\label{app:sharedness_audit}

\subsection{Operator Recovery and Estimation Contract}

The audit uses 540 previously scored operator family--action--state rows: five
families, four actions, and 27 states per family. The frozen four-action
coordinate matrix is full rank in each of the five family folds. Their
condition numbers are 23.4 for Qwen,
1244.8 for Mistral, 34.7 for OLMo, and 33.6 for both GLM and Granite. Algebraic
operator recovery replays every frozen operator-readout prediction with maximum absolute
error below $10^{-9}$. Re-fitting the three development-family science
coordinates from the recovered operators agrees with the original direct fit
to $3.3\times10^{-15}$, providing a check beyond prediction replay.

The ridge grid spans zero through 10 and is refined between 0.1 and 2. Holding
out a complete trajectory in every family--action cell selects relative ridge
0.7. Each absolute cell penalty equals 0.7 times the mean eigenvalue of that
cell's design Gram. All reported response-scale quantities use the common
three-development-family scale. The family-specific outer-fold scale and every
candidate ridge remain in the sensitivity artifact.

For this retrospective decomposition, the reference set is the five observed
families with $\pi_m=1/5$; write its mean as $\bar a_u^{(5)}$ to distinguish it
from the prospective source-reference coordinate, and suppress the superscript
below. Let $W_u$ be the inverse-squared response scale frozen on the three
development families, and let $Z$ analogously use the frozen reliability dead
zones. For $N$ family--state cells and $d$ response coordinates, define
\begin{equation}
 \begin{aligned}
 D_u^2&=\frac1{Nd}\sum_{m,s}
 \left\|W_u^{1/2}R_m(s)(\beta_m+\gamma_{m,u})\right\|_2^2,\\
 (D_u^{\rm DZ})^2&=\frac1{Nd}\sum_{m,s}
 \left\|Z^{1/2}R_m(s)(\beta_m+\gamma_{m,u})\right\|_2^2,\\
 S_u&=\frac{L_u^2}{L_u^2+D_u^2},\qquad
 L_u^2=\frac1{Nd}\sum_{m,s}\|W_u^{1/2}R_m(s)\bar a_u\|_2^2.
 \end{aligned}
 \label{eq:sharedness-metrics}
\end{equation}
Thus $D_u$ is common-scale pooling error and $S_u$ is the shared-signal
fraction. Exact tests enumerate all $2^{15}$ restricted-residual cluster sign
patterns and use Holm correction across actions. The separate diagnostic
equivalence test is $H_0:D_u^{\rm DZ}\geq1$ against $H_1:D_u^{\rm DZ}<1$; the
dead zones were frozen prospectively, but their use as a sharedness margin was
chosen retrospectively. Tests and predictive comparisons use unpenalized least
squares; ridge 0.7 applies only to coordinate-stability summaries.

\begin{table}[H]
\caption{Direct five-family coordinate stability. Intervals are 10,000
trajectory-block percentile intervals conditional on the five observed
families and selected ridge. ``DZ exceed'' is the fraction of response cells in
which pooling error exceeds the frozen reliability dead zone.}
\label{tab:sharedness_full}
\centering
\scriptsize
\resizebox{\columnwidth}{!}{%
\begin{tabular}{lrrrrrr}
\toprule
Action & Cosine min/mean & Norm ratio & $D_u$ (95\%) & $S_u$ & $D_u^{\rm DZ}$ (95\%) & DZ exceed \\
\midrule
Math & .315/.696 & 3.55 & .342 [.311,.370] & .648 & 2.468 [2.252,2.710] & 24.3\% \\
Code & .448/.653 & 2.43 & .356 [.326,.394] & .800 & 2.620 [2.454,2.815] & 39.7\% \\
Science & $-.770$/.057 & 11.20 & .633 [.605,.672] & .269 & 4.417 [4.324,4.523] & 40.1\% \\
Reading & $-.732$/$-.017$ & 19.43 & 1.038 [.976,1.105] & .404 & 2.837 [2.714,2.998] & 32.3\% \\
\bottomrule
\end{tabular}%
}
\end{table}

\subsection{Difference, Interaction, and Predictive Diagnostics}

The action-specific restricted-residual sign-flip values before/after Holm
adjustment are .0874/.0874 for mathematics, .00165/.00494 for code,
$6.10\times10^{-5}$/$.000244$ for science, and .0239/.0477 for reading.
The joint family-main, total-deviation, and interaction-increment values are
$6.10\times10^{-5}$, $1.83\times10^{-4}$, and $3.05\times10^{-4}$. These are
fixed-15-cluster diagnostics whose validity relies on approximate restricted
residual sign symmetry; they are not tests over a population of model families.

\begin{table}[H]
\caption{Three-fold held-trajectory RMSE under the common response scale. The
family-aware models consume opened labels from every observed family and are
diagnostic, not unseen-family deployment rules.}
\label{tab:sharedness_cv}
\centering
\small
\begin{tabular}{lrrrrr}
\toprule
Coordinate model & All & Math & Code & Science & Reading \\
\midrule
Shared action & .824 & .757 & .887 & .800 & .848 \\
$+$ family main effect & .802 & .794 & .863 & .720 & .823 \\
$+$ family--action interaction & .750 & .756 & .799 & .577 & .840 \\
\bottomrule
\end{tabular}
\end{table}

Compared with the shared model, the full model's descriptive held-trajectory
MSE improvement is 17.3\% overall, 0.3\% for mathematics, 18.8\% for code,
48.0\% for science, and 1.8\% for reading. The three folds have heavily
overlapping training sets, so their resampled ranges are descriptive rather
than formal confidence intervals.

Deleting one family at a time gives mean-cosine ranges .628--.781 for
mathematics, .573--.700 for code, $-.121$--.486 for science, and
$-.136$--.167 for reading. The corresponding $D_u$ ranges are .163--.371,
.285--.427, .500--.752, and .555--1.333. The science and reading effects persist
after removing Mistral, while their magnitude depends on the particular
observed family set. The analysis characterizes the fixed pulse measurements,
chosen ridge, and available target-action training realization across these
five families.

\section{Prospective Granite Details}
\label{app:granite_details}

\subsection{Complete Core Comparison}

The Granite contract binds revision \texttt{4009206d5fc9}, a fixed prompt and
tokenization identity, the 540-row core prediction file, the 810-row action
decision file, and the 216-row selector extension before target labels open.
All 108 target branches complete and the fresh-process replay receipt matches
the canonical core and extension outputs.

\begin{table}[H]
\caption{Complete sealed Granite comparison. RMSE is source-standardized;
direction is coordinate-macro balanced accuracy. The direct pulse and operator
methods read the same L-State pulse block.}
\label{tab:granite_all}
\centering
\scriptsize
\resizebox{\columnwidth}{!}{%
\begin{tabular}{lrrrrrrr}
\toprule
Method & RMSE & MSE gain & Cosine & Sign BA & Regret & Regret red. & Top-1 \\
\midrule
Capability & 1.1716 & 0.0000 & \textbf{0.7382} & 0.6537 & 0.2906 & 0.0000 & 0.5309 \\
Direct pulse & \textbf{0.5439} & \textbf{0.7845} & 0.3155 & 0.5354 & 0.2149 & 0.2604 & 0.6235 \\
Matched state & 0.6916 & 0.6516 & 0.4710 & 0.5351 & \textbf{0.2119} & \textbf{0.2707} & \textbf{0.6481} \\
Operator & 0.6344 & 0.7068 & 0.1725 & 0.6566 & 0.2530 & 0.1296 & 0.5926 \\
Full L-State & 0.6805 & 0.6627 & 0.5079 & \textbf{0.6640} & 0.2166 & 0.2545 & 0.6358 \\
\bottomrule
\end{tabular}%
}
\end{table}

\begin{table}[H]
\caption{Granite operator-readout response and direction by target action.}
\label{tab:granite_actions}
\centering
\small
\begin{tabular}{lrrr}
\toprule
Action & RMSE & Sign BA & Determinate \\
\midrule
Math & 0.5963 & 0.6958 & 79 \\
Code & 0.8878 & 0.8252 & 89 \\
Science & 0.2860 & 0.4256 & 43 \\
Reading & 0.6198 & 0.6420 & 46 \\
\bottomrule
\end{tabular}
\end{table}

\begin{table}[H]
\caption{Granite whole-trajectory bootstrap over the three observed
trajectories (10,000 valid resamples; seed 20260919). Estimate is the aggregate
point estimate; interval columns are percentile bounds.}
\label{tab:granite_bootstrap}
\centering
\small
\begin{tabular}{lrrr}
\toprule
Quantity & Estimate & 2.5\% & 97.5\% \\
\midrule
Operator MSE gain & 0.7068 & 0.6990 & 0.7110 \\
Operator sign BA & 0.6566 & 0.6081 & 0.6584 \\
Operator sign BA gain & 0.0028 & $-0.0615$ & 0.0823 \\
L-State regret reduction & 0.2545 & 0.1565 & 0.4012 \\
\bottomrule
\end{tabular}
\end{table}

Whole-trajectory resampling gives $[69.9\%,71.1\%]$ for the operator MSE reduction,
$[0.608,0.658]$ for its sign BA, $[-6.15,8.23]$ points for its sign gain, and
$[15.7\%,40.1\%]$ for full-state regret reduction. The sealed Granite result
reproduces large response and action gains while replacing the GLM
science-action inversion with non-degenerate direction estimates.

\section{Pulse Identification and Cross-Duration Robustness}
\label{app:duration_details}

\subsection{Pulse Identification and Local-Dynamics Checks}

Theorem \ref{thm:smooth-factor} requires a stable local response measurement,
and Theorems \ref{thm:pulse-ident}--\ref{thm:pulse-opt} connect operator
recovery to the pulse basis. Pulse measurement is stable: the median intraclass correlation is 0.967 and
the median signal-to-noise ratio is 29.2, with 19 of 20 reliability cells at
ICC 0.75 or above. This stability supports endpoint-specific readout selection
within a reliable pulse assay; the development and GLM contrasts show that
direction quality depends on readout geometry.

Pulse-count ablation shows that three to four directions carry the strongest
continuous signal. One, two, three, and four pulses yield RMSE
9.108, 9.614, 8.358, and 8.293, respectively; action regret is 0.300, 0.259,
0.262, and 0.248. Different endpoints favor different pulse counts: sign
balanced accuracy peaks with two pulses at 0.615, while RMSE and regret attain
their best values with four. The near-rank-complete three- and four-pulse
settings deliver the strongest continuous prediction, while the two-pulse
direction peak explains the endpoint-specific readouts.

Finally, controlled pulse mixtures have median relative residual 0.342; changing
exposure from eight steps to four or sixteen gives median relative residual
0.747; and a fixed-step amplitude scan gives 0.757 away from the reference
amplitude. The smaller mixture residual and the duration response quantify the
finite-step geometry induced by the smooth-factorization result.

The cross-duration study reuses nine GLM states and evaluates four-, eight-,
and sixteen-step pulse and target exposures. The operator readout has the best aggregate raw
response RMSE and sign BA among the five core methods at every duration.
The full \lstate{} readout reduces source-standardized RMSE against capability by 45.3\%, 37.3\%,
and 25.1\%, and reduces action regret by 21.8\%, 22.7\%, and 29.3\% at four,
eight, and sixteen steps. Both improvements hold in all nine
duration-by-trajectory slices. Retrospectively fitted operator action coordinates
retain cosine 0.952--0.997 at four steps and 0.973--0.992 at sixteen steps
relative to the eight-step coordinate. This measures within-GLM duration
stability only; it is not evidence for cross-family coordinate sharing. The
registered endpoint aggregates and
coordinate-stability diagnostics appear below; the complete five-method
raw-primary, per-step, and trajectory tables accompany the evidence artifact.

\subsection{Registered Duration Experiment}

The duration experiment evaluates nine GLM states under four-, eight-, and
sixteen-step pulse and target exposures. Raw capability difference is the
primary response; per-step response is a separately reported diagnostic. The
eight-step branches are reused read-only, 144 new branches cover the other two
durations, and every adapter is restored.

\begin{table}[H]
\caption{Registered-endpoint duration results. Response cells report raw
RMSE/sign BA for capability and the operator readout; action cells report
normalized regret/top-1 for capability and the full L-State readout.}
\label{tab:duration_results}
\centering
\small
\begin{tabular}{rrrrr}
\toprule
Steps & Cap. response & Oper. response & Cap. action & L-State action \\
\midrule
4 & .4428/.3564 & \textbf{.1012/.6793} & .5592/.3333 & .4375/.4259 \\
8 & .4529/.3828 & \textbf{.1875/.6623} & .5771/.3333 & .4460/.3889 \\
16 & .4861/.3814 & \textbf{.2878/.7878} & .6063/.3333 & .4285/.3704 \\
\bottomrule
\end{tabular}
\end{table}

At four, eight, and sixteen steps, capability/full-L-State source-standardized RMSE is
1.2801/0.7000, 1.3147/0.8240, and 1.4574/1.0922, respectively, yielding the
reported 45.3\%, 37.3\%, and 25.1\% reductions.

Across all nine duration-by-trajectory slices, the full \lstate{} readout has lower
response RMSE and lower action regret than capability. Duration-specific operator diagnostic
coordinates have cosine 0.952--0.997 at four steps and 0.973--0.992 at sixteen
steps relative to the eight-step coordinate.

\section{Probe Cost Accounting}
\label{app:probe_cost}

\begin{table}[H]
\caption{Configuration-level probe break-even.}
\label{tab:probe_cost}
\centering
\small
\begin{tabular}{lrrr}
\toprule
Resource & Four-pulse probe & One target action & Break-even \\
\midrule
Optimizer steps & 32 & 8 & 4.0 \\
Training examples & 128 & 32 & 4.0 \\
Evaluation examples & 800 & 320 & 2.5 \\
\bottomrule
\end{tabular}
\end{table}

\section{Reproducibility and Artifact Lineage}

The development run binds the frozen data registry, configuration, source tree,
model manifests, state adapters, branch outputs, validated input set, and 18
analysis outputs by SHA-256. Fresh-process replay matches every canonical
development hash.

The first sealed GLM run extends the chain chronologically. It binds the GLM model
snapshot at revision \texttt{645b8482494e}, source tree
\texttt{b975229c6ef7}, frozen rule \texttt{5349b83e483a}, prediction manifest
\texttt{865be9eb5612}, and label-opening receipt \texttt{a5c9bd24048d}. The
prediction manifest records 540 response predictions and 810 action choices
before target training begins. After scoring, the result contains 540 rows and
has SHA-256 \texttt{4cf352bcf3c7}; the scored JSONL has SHA-256
\texttt{271332267cd4}. A fresh process reproduces both canonical outputs and
publishes a PASS receipt bound to analysis manifest \texttt{017469bd7363}.

The revision chain adds registered repeat, duration, selector, and second-family
stages. The action-wise pulse selector has content hash \texttt{fb885c94a096}. The
second sealed Granite run binds source tree \texttt{07738052a7a5}, configuration
\texttt{ef2100d7317f}, core freeze receipt \texttt{8c9758675ebc}, and selector
extension receipt \texttt{54fda8b3a183}; both receipts record zero target
branches. All 108 target branches validate, analysis manifest
\texttt{3c644bad} binds the scored outputs, and fresh replay passes all five
core/extension comparisons. The revision registry records E0--E9 as PASS and
maps each headline number to its replayed evidence file.

The retrospective five-family coordinate audit is a separate derivative stage
that reads those frozen scored rows without retraining. Its analysis manifest
has SHA-256 \texttt{e05a19e13c74}; the copied deterministic analysis script has
SHA-256 \texttt{13ba7e5f46db}. The manifest binds every reported table, the
10,000-resample seed and count, the exhaustive sign-flip contract, and the
operator-replay checks.

\end{document}